\newcommand{\RIGHTCOMMENT}[1]{\bgroup\hfill~#1\egroup}
\newtheorem{assumption}{Assumption}
\newtheorem{lemma}{Lemma}
\newtheorem{thm}{Theorem}
\newenvironment{proof}{\paragraph{Proof:}}{\hfill$\square$}
\def \mx {\mathbf{x}}
\def \my {\mathbf{y}}
\def \mz {\mathbf{z}}
\def \mb {\mathbf{b}}
\def \ms {\mathbf{s}}
\def \mm {\mathbf{m}}
\def \mw {\mathbf{w}}
\def \mI {\mathbf{I}}
\def \mzero {\mathbf{0}}
\def \meps {\boldsymbol{\epsilon}}
\def \mv {\mathbf{v}}
\definecolor{peach-orange}{rgb}{1.0, 0.8, 0.6}
\definecolor{piggypink}{rgb}{0.99, 0.87, 0.9}
\definecolor{aurometalsaurus}{rgb}{0.43, 0.5, 0.5}
\icmltitlerunning{Deep Generative Learning via Schr\"{o}dinger Bridge}
\begin{document}

\twocolumn[
\icmltitle{Deep Generative Learning via Schr\"{o}dinger Bridge}



\icmlsetsymbol{equal}{*}

\begin{icmlauthorlist}
\icmlauthor{Gefei Wang}{ust}
\icmlauthor{Yuling Jiao}{whu}
\icmlauthor{Qian Xu}{web}
\icmlauthor{Yang Wang}{ust,jointlab}
\icmlauthor{Can Yang}{ust,jointlab}
\end{icmlauthorlist}

\icmlaffiliation{ust}{Department of Mathematics, The Hong Kong University of Science
and Technology, Hong Kong, China}
\icmlaffiliation{whu}{School of Mathematics and Statistics, Wuhan University, Wuhan, China}
\icmlaffiliation{web}{AI Group, WeBank Co., Ltd., Shenzhen, China}
\icmlaffiliation{jointlab}{Guangdong-Hong Kong-Macao Joint Laboratory for Data-Driven Fluid Mechanics and Engineering Applications, The Hong Kong University of Science
and Technology, Hong Kong, China}

\icmlcorrespondingauthor{Yuling Jiao}{yulingjiaomath@whu.edu.cn}
\icmlcorrespondingauthor{Can Yang}{macyang@ust.hk}

\icmlkeywords{Machine Learning, ICML}

\vskip 0.3in
]



\printAffiliationsAndNotice{}  

\begin{abstract}
We propose to learn a generative model via   entropy interpolation with a Schr\"{o}dinger Bridge.
The generative learning task can be formulated as  interpolating between a reference distribution and a target distribution
based on the Kullback-Leibler divergence. At the population level, this entropy interpolation   is  characterized  via an SDE on $[0,1]$ with a time-varying drift term.
 At the sample level, we derive our Schr\"{o}dinger Bridge algorithm by plugging the  drift term estimated  by a deep score estimator and a deep density ratio estimator into the Euler-Maruyama method. Under some mild smoothness assumptions of the target distribution,
we prove the consistency
of both the score estimator and the density ratio estimator, and then establish
the consistency of the proposed Schr\"{o}dinger Bridge approach. Our theoretical results guarantee that the distribution learned by our approach converges to the target distribution.
Experimental results on multimodal synthetic data and benchmark data support our theoretical findings and indicate that the generative model via Schr\"{o}dinger Bridge  is comparable  with state-of-the-art  GANs, suggesting a new formulation of generative learning. We demonstrate its usefulness in image interpolation and image inpainting.
\end{abstract}

\section{Introduction}

Deep generative models have achieved enormous success in learning the underlying high-dimensional data distribution from samples. They have various applications in machine learning, like image-to-image translation \cite{zhu17,Choi_2020_CVPR}, semantic image editing \cite{zhu16, shen2020interpreting} and audio synthesis \citep{oord16, prenger2019waveglow}. Most of existing generative models seek to learn a nonlinear function to transform a simple reference distribution to the target distribution as data generating mechanisms. They can be categorized as either likelihood-based models or implicit generative models.

Likelihood-based models, such as variational auto-encoders (VAEs) \cite{kingma14} and flow-based methods \cite{dinh2014nice}, optimize the negative log-likelihood or its surrogate loss, which is equivalent to minimize the KL-divergence between the target distribution and the generated distribution. Although their ability to learn flexible distributions is restricted by the way to model the probability density,
many works have been established to alleviate this problem and achieved appealing results \cite{makhzani2016adversarial,tolstikhin2018wasserstein,razavi2019generating, dinh2016density, papamakarios2017masked, kingma2018glow, behrmann19a}. As a representative of implicit generative models, generative adversarial networks (GANs) use a min-max game objective to learn the target distribution. It has been shown that vanilla GAN \cite{goodfellow14} minimizes the Jensen-Shannon divergence between the target distribution and the generated distribution.
To generalize vanilla GAN, researchers consider some other criterions including more general $f$-divergences \cite{nowozin16}, 1-Wasserstein distance \cite{arjovsky17} and maximum mean discrepancy (MMD) \cite{binkowski18}. Meanwhile, recent progress on designing network architectures \cite{radford15, zhang18} and training techniques \cite{karras18, brock2018large} has enabled GANs to produce impressive high-quality images.


Despite the extraordinary performance of generative models \citep{razavi2019generating, kingma2018glow, brock2018large, karras2019style}, there still exists a gap between the empirical success and the theoretical justification of these methods. For likelihood-based models, consistency results require that the data distribution is within the model family, which is often hard to hold in practice \cite{kingma14}.
Recently, new generative models have been developed from different perspectives, such as gradient flow in a measure space in which GAN can be covered as a special case \citep{gao2019deep, arbel2019maximum} and stochastic differential equations (SDE) \citep{song2019generative, song2020improved, song2021scorebased}. To push a simple initial distribution to the target one, however, these methods \citep{gao2019deep, arbel2019maximum, liutkus2019sliced, song2019generative, song2020improved,block2020generative} require the evolving time to go to infinity at the population level.
Therefore,
these methods require a strong assumption to achieve model consistency: the target must be log-concave or satisfy the log-Sobolev inequality.

To fill the gap, we propose a Schr\"{o}dinger Bridge approach to learn generative models.
Schr\"{o}dinger Bridge tackles the problem by interpolating a reference distribution to a target distribution based on the Kullback-Leibler divergence. The Schr\"{o}dinger Bridge can be formulated  via an SDE on a finite time interval $[0,1]$ with a time-varying drift term. At the population level, we can solve the SDE  using the  standard Euler-Maruyama method.
At the sample level, we derive our Schr\"{o}dinger Bridge algorithm by plugging the drift term into the Euler-Maruyamma method, where the drift term can be accurately estimated by a deep score network. 
The major contributions of this work are as follows:
\begin{itemize}
\item From the theoretical perspective, we prove the consistency of the Schr\"{o}dinger Bridge approach under the some mild smoothness assumptions of the target distribution. Our theory guarantees that the learned distribution converges to the target. 
To achieve model consistency, existing theories rely on strong assumptions, e.g., the target must be log-concave or satisfy some error bound conditions, such as the log-Sobolev inequality. These assumptions may not hold in practice.
\item From the algorithmic perspective, we develop a novel two-stage approach to make the theory of Schr\"{o}dinger Bridge work in practice, where the first stage effectively learns a smoothed version of the target distribution and the second stage drives the smoothed one to the target distribution. Figure \ref{demo} gives an overview of our two-stage algorithm.
\item Through synthetic data, we demonstrate that our Schr\"{o}dinger Bridge approach can stably learn multimodal distribution, while GANs are often highly unstable and prone to miss modes \cite{che17}. We also show that the proposed approach achieves comparable performance with state-of-the-art GANs on benchmark data.
\end{itemize}
In summary, we believe that our work suggests a new formulation of generative models.
\begin{figure}[ht]
\vskip 0.2in
\begin{center}
\centerline{\includegraphics[width=0.9\columnwidth]{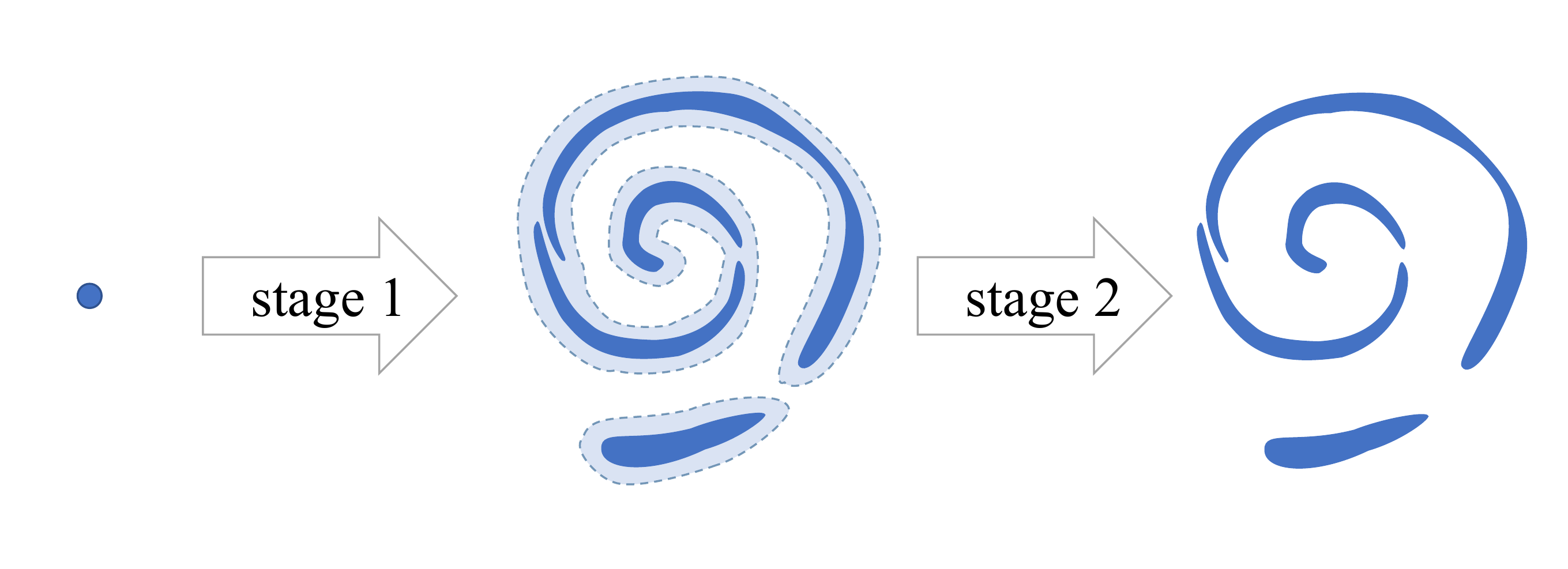}}
\caption{Overview of our two-stage algorithm. Stage 1 drives samples at $\mathbf{0}$ (left) to a smoothed data distribution (middle), and stage 2 learns the underlying target data distribution (right) with samples produced by stage 1. Stage 1 and stage 2 are achieved through the two different Schrodinger Bridges with theoretically guaranteed performance.}
\label{demo}
\end{center}
\vskip -0.2in
\end{figure}
\section{Background}
Let's first recall some background on  Schr\"{o}dinger Bridge problem adopted from \cite{leonard2014survey,chen2020stochastic}.

Let $\Omega = C([0,1],\mathbb{R}^d)$ be the space of $\mathbb{R}^d$-valued continuous functions on time interval $[0, 1]$. Denote $X = (X_t)_{t\in [0,1]}$ as the canonical process on $\Omega$, where $X_t(\omega) = \omega_t$, $\omega = (\omega_s)_{s\in [0,1]}\in \Omega$.
The canonical $\sigma$-field on $\Omega$ is then generated as $\mathscr{F}  = \sigma(X_t,t\in[0,1]) = \left\{\{\omega:(X_t(\omega))_{t\in [0,1]}\in H\}:H\in\mathscr{B}(\mathbb{R}^d)\right\}$. Denote $\mathcal{P}(\Omega)$ as the space of probability measures on the path space $\Omega$, and $\mathbf{W}^{\mx}_\tau\in\mathcal{P}(\Omega)$ as the Wiener measure with variance $\tau$ whose initial marginal is $\delta_{\mx}$. The law of the reversible Brownian motion, is then defined as $\mathbf{P}_{\tau} = \int \mathbf{W}^{\mx}_\tau\mathrm{d}\mx$, which is an unbounded measure on $\Omega$. One can observe that, $\mathbf{P}_{\tau}$ has a marginal coincides with the Lebesgue measure $\mathscr{L}$ at each $t$.

\citet{schrodinger1932theorie} studied the problem of finding the most likely random evolution between two
continuous probability distributions $\mu, \nu \in \mathcal{P}(\mathbb{R}^d)$. Nowadays,  people call the study of  Schr\"{o}dinger as the Schr\"{o}dinger  Bridge problem (SBP). In fact,
SBP can be further formulated as seeking a probability law on a path space that interpolates between $\mu$ and $\nu$, such that the probability law is close to the prior law of the Brownian diffusion in the sense of relative entropy \cite{jamison1975markov,leonard2014survey}, i.e.,
 finding  a path measure $\mathbf{Q}^* \in \mathcal{P}(\Omega)$ with marginal $\mathbf{Q}^*_{t} = (X_t)_{\#}\mathbf{Q}^*=\mathbf{Q}^*\circ X_t^{-1}, t\in [0,1]$ such that
$$\mathbf{Q}^* \in {\arg\min}_{\mathbf{Q}\in \mathcal{P}(\Omega)} \mathbb{D}_{\mathrm{KL}}(\mathbf{Q}||\mathbf{P}_{\tau}),$$ and  $$\mathbf{Q}_0 = \mu, \mathbf{Q}_1 = \nu,$$ where $\mu, \nu \in \mathcal{P}(\mathbb{R}^d)$,  relative entropy $\mathbb{D}_{\mathrm{KL}}(\mathbf{Q}||\mathbf{P}_{\tau}) = \int \log(\frac{d \mathbf{Q}}{d \mathbf{P}_{\tau}}) d \mathbf{Q} $ if $\mathbf{Q}\ll \mathbf{P}_{\tau}$ (i.e. $\mathbf{Q}$ is absolutely continuous w.r.t. $\mathbf{P}_{\tau}$), and $\mathbb{D}_{\mathrm{KL}}(\mathbf{Q}||\mathbf{P}_{\tau}) = \infty$ otherwise.
The following results characterize the solution to SBP.
\begin{thm}\label{th01}\cite{leonard2014survey}
If $\mu, \nu \ll \mathscr{L}$,  then SBP admits a unique solution $\mathbf{Q}^* = f^*(X_0)g^*(X_1)\mathbf{P}_{\tau}$, where
$f^*$, $g^*$ are $\mathscr{L}$-measurable nonnegative  functions on $\mathbb{R}^d$ satisfying the  Schr\"{o}dinger system
$\left\{\begin{array}{l}
 f^*(\mx) \mathbb{E}_{\mathbf{P}_{\tau}}\left[g^*\left(X_{1}\right) \mid X_{0}=\mx\right]= \frac{\mathrm{d} \mu}{\mathrm{d}\mathscr{L}}(\mx), \quad \mathscr{L}-a . e . \\
 g^*(\my)  \mathbb{E}_{\mathbf{P}_{\tau}}\left[f^{*}\left(X_{0}\right) \mid X_{1}=\my\right]=\frac{\mathrm{d} \nu}{\mathrm{d}\mathscr{L}}(\my), \quad \mathscr{L}-a . e .
 \end{array}\right.$
\end{thm}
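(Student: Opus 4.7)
My plan is to reduce the path-space optimization to a two-endpoint coupling problem on $\mathbb{R}^d \times \mathbb{R}^d$, solve that reduced problem by convex duality, and then lift the minimizer back to the path space.

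First, I would disintegrate both $\mathbf{Q}$ and $\mathbf{P}_\tau$ with respect to the endpoint pair $(X_0, X_1)$. Writing $\pi^{\mathbf{Q}} = (X_0, X_1)_\# \mathbf{Q}$ and $R = (X_0, X_1)_\# \mathbf{P}_\tau$, where $R$ has Gaussian density $p_\tau(x, y) = (2\pi\tau)^{-d/2} \exp(-|y - x|^2/(2\tau))$ with respect to Lebesgue on $\mathbb{R}^d \times \mathbb{R}^d$, we can write $\mathbf{Q} = \int \mathbf{Q}^{x,y}\, \pi^{\mathbf{Q}}(dx, dy)$ and $\mathbf{P}_\tau = \int \mathbf{P}_\tau^{x,y}\, R(dx, dy)$. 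The chain rule for relative entropy then gives
\[\mathbb{D}_{\mathrm{KL}}(\mathbf{Q} \| \mathbf{P}_\tau) \;=\; \mathbb{D}_{\mathrm{KL}}(\pi^{\mathbf{Q}} \| R) \;+\; \int \mathbb{D}_{\mathrm{KL}}(\mathbf{Q}^{x,y} \| \mathbf{P}_\tau^{x,y})\, \pi^{\mathbf{Q}}(dx, dy),\]
whose second term is minimized pointwise by setting $\mathbf{Q}^{x,y} = \mathbf{P}_\tau^{x,y}$, the Brownian bridge from $x$ to $y$. This collapses the SBP to the static problem of minimizing $\mathbb{D}_{\mathrm{KL}}(\pi \| R)$ over couplings $\pi$ of $\mu$ and $\nu$ on $\mathbb{R}^d \times \mathbb{R}^d$.

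Next, I would solve the static problem by convex duality. Relative entropy is strictly convex on the convex set of couplings with prescribed marginals, so the minimizer $\pi^*$, if it exists, is unique. Pairing Lagrange multipliers against the two marginal constraints and differentiating the Lagrangian shows that the optimizer factorizes as $\pi^*(x, y) = f^*(x) g^*(y) p_\tau(x, y)$ for some nonnegative $\mathscr{L}$-measurable functions $f^*, g^*$. Imposing the marginals yields
\[f^*(x) \int g^*(y) p_\tau(x, y)\, dy \;=\; \frac{d\mu}{d\mathscr{L}}(x), \qquad g^*(y) \int f^*(x) p_\tau(x, y)\, dx \;=\; \frac{d\nu}{d\mathscr{L}}(y),\]
which is exactly the Schr\"odinger system once one identifies $\int g^*(y) p_\tau(x, y)\, dy = \mathbb{E}_{\mathbf{P}_\tau}[g^*(X_1) \mid X_0 = x]$ (since $p_\tau(x, \cdot)$ is the Brownian transition density from $x$), and similarly for the second equation. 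To return to the path space, I would set $\mathbf{Q}^* = \int \mathbf{P}_\tau^{x,y}\, \pi^*(dx, dy)$; since the endpoint Radon--Nikodym derivative is $d\pi^*/dR = f^*(X_0) g^*(X_1)$, gluing the Brownian bridges against $\pi^*$ gives $\mathbf{Q}^* = f^*(X_0) g^*(X_1)\, \mathbf{P}_\tau$.

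The hard part is not the variational characterization above but the \emph{existence} (and measurable selection) of the pair $(f^*, g^*)$ solving the Schr\"odinger system; uniqueness is an immediate consequence of strict convexity. One route is the Schr\"odinger--Sinkhorn alternating-projection iteration, which projects successively onto the two marginal constraints and converges under a finite-entropy hypothesis; another is a direct argument using lower semicontinuity of $\mathbb{D}_{\mathrm{KL}}$ and compactness, in a suitable topology, of the sublevel sets of couplings with finite relative entropy against $R$. Both routes require care because $R$ and $\mathbf{P}_\tau$ are unbounded measures with Lebesgue marginals, so one must work throughout with Lebesgue densities and verify that the resulting $f^*, g^*$ are finite $\mathscr{L}$-a.e.\ and integrable against $p_\tau$ as needed.
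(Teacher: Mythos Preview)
Your proposal is correct and follows the standard route to this result (disintegration at the endpoints, chain rule for relative entropy, reduction to the static Schr\"odinger problem, convex duality giving the product form, then lifting via Brownian bridges). The paper itself does not prove Theorem~\ref{th01} at all: its appendix simply records ``Theorem~\ref{th01} follows from \cite{leonard2014survey}.'' So there is nothing to compare against beyond the citation, and your sketch is in fact precisely the argument one finds in that survey. Your identification of existence of the Schr\"odinger pair $(f^*,g^*)$ as the delicate point, and of Sinkhorn iteration or direct lower-semicontinuity/compactness as the two available routes, is accurate; if anything you are being more careful than necessary for a theorem the paper treats as a black box.
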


Besides $\mathbf{Q}^*$, we can also characterize the density of the time-marginals of $\mathbf{Q}^*$, i.e. $\frac{d\mathbf{Q}_t^{*}}{d\mathscr{L}}(\mx)$.

Let $q(\mx)$ and $p(\my)$ be the density of $\mu$ and $\nu$ respectively, and $h_{\tau}(s, \mx, t, \my) = [2\pi\tau(t-s)]^{-d/2}\exp\left(-\frac{\|\mx - \my\|^2}{2\tau(t-s)}\right)$ be the transition density of the Wiener process. Then we have
$\mathbb{E}_{\mathbf{P}_{\tau}}\left[f^{*}\left(X_{0}\right) \mid X_{1}=\my\right] = \int h_{\tau}(0, \mx, 1, \my)f_{0}(\mx) \mathrm{d} \mx, \\
\mathbb{E}_{\mathbf{P}_{\tau}}\left[g^*\left(X_{1}\right) \mid X_{0}=\mx\right] = \int h_{\tau}(0, \mx, 1, \my)g_{1}(\my) \mathrm{d} \my$.
The above Schr\"{o}dinger system is equivalent to
$$\left\{\begin{array}{l}
 f^*(\mx) \int h_{\tau}(0, \mx, 1, \my)g_{1}(\my) \mathrm{d} \my= q(\mx), \\
 g^*(\my)  \int h_{\tau}(0, \mx, 1, \my)f_{0}(\mx) \mathrm{d} \mx= p(\my).
 \end{array}\right.$$
Denote $f_{0}(\mx) = f^*(\mx), \ \ g_{1}(\my) = g^*(\my),$
\begin{align*}
{f_{1}}(\my) = \int h_{\tau}(0, \mx, 1, \my)f_{0}(\mx) \mathrm{d} \mx,\\
{g_{0}}(\mx) = \int h_{\tau}(0, \mx, 1, \my)g_{1}(\my) \mathrm{d} \my.
\end{align*}
The  Schr\"{o}dinger system  in Theorem \ref{th01} can also be characterized
  by
  \begin{equation}\label{sbs}
  q(\mx) = f_0(\mx) {g_{0}}(\mx), \ \  p(\my)=  {f_{1}}(\my)g_1(\my)
  \end{equation}
  with the following  forward and backward time harmonic equations  \cite{chen2020stochastic}
 $$\left\{\begin{array}{l}
\partial_t f_t(\mx) = \frac{\tau\Delta}{2} f_t(\mx),  \\
\partial_t g_t(\mx) = -\frac{\tau\Delta}{2} g_t(\mx),
\end{array}\right. \quad \text { on }(0,1) \times \mathbb{R}^{d}.
$$
 Let $q_t$ denote marginal density of $\mathbf{Q}_t^{*}$, then it can be represented \cite{chen2020stochastic} by the product of  $g_t$  and $f_t$ defined as  $q_t(\mx) = \frac{d\mathbf{Q}_t^{*}}{d\mathscr{L}}(\mx)$, and $q_t(\mx) = f_t(\mx)g_t(\mx)$.

There are also dynamic formulations of SBP. Let  $\mathcal{U}$ consist of admissible Markov controls with finite energy.
 The following theorem shows that, the vector field
 \begin{equation}\label{drift}
 \begin{aligned}
 &\mathbf{u}_{t}^* = \tau \mv^*_{t} = \tau \nabla_{\mx}\log g_t(\mx) \\
 = &\tau \nabla_{\mx}\log  \int h_{\tau}(t, \mx, 1, \my)g_1(\my) \mathrm{d} \my
 \end{aligned}
 \end{equation}
  solves such a stochastic control problem:
\begin{thm}\label{th02}\cite{dai1991stochastic}
$$\mathbf{u}^*_{t}(\mx)\in \arg\min_{\mathbf{u} \in \mathcal{U}}\mathbb{E}\left[\int_0^1\frac{1}{2}\|\mathbf{u}_t\|^2\mathrm{d}t\right]$$
s.t.
\begin{equation}\label{sde}
\left\{\begin{array}{l}
\mathrm{d}\mx_t = \mathbf{u}_t \mathrm{d}t + \sqrt{\tau}\mathrm{d}\mw_t, \\
\mx_0\sim q(\mx),\quad \mx_1\sim p(\mx).
\end{array}\right.
\end{equation}
\end{thm}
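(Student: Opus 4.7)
The plan is to prove Theorem \ref{th02} by reducing the stochastic control problem to the static Schr\"odinger Bridge problem of Theorem \ref{th01} via Girsanov's theorem, and then identifying the drift of the optimal path measure $\mathbf{Q}^*$ by matching its Fokker--Planck equation against the forward/backward harmonic system for $(f_t, g_t)$.

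First, I would let $\mathbf{Q}^{\mathbf{u}}$ denote the law on $\Omega$ of the controlled diffusion $\mathrm{d}\mx_t = \mathbf{u}_t\, \mathrm{d}t + \sqrt{\tau}\, \mathrm{d}\mw_t$ with $\mx_0 \sim \mu$ and $\mx_1 \sim \nu$, for $\mathbf{u} \in \mathcal{U}$. Applying Girsanov's theorem to the change of measure from $\mathbf{Q}^{\mathbf{u}}$ to $\mathbf{P}_{\tau}$ gives
\begin{equation*}
\mathbb{D}_{\mathrm{KL}}(\mathbf{Q}^{\mathbf{u}} \| \mathbf{P}_{\tau}) \;=\; \mathbb{D}_{\mathrm{KL}}(\mu \| \mathscr{L}) + \frac{1}{2\tau}\,\mathbb{E}\!\left[\int_0^1 \|\mathbf{u}_t\|^2\, \mathrm{d}t\right].
\end{equation*}
Because $\mu$ is fixed, the first term is a constant, so minimizing $\mathbb{D}_{\mathrm{KL}}(\mathbf{Q}\|\mathbf{P}_{\tau})$ over path measures $\mathbf{Q}$ with marginals $\mathbf{Q}_0=\mu$, $\mathbf{Q}_1=\nu$ is equivalent (up to the positive factor $1/\tau$) to minimizing the kinetic energy $\mathbb{E}[\int_0^1 \tfrac{1}{2}\|\mathbf{u}_t\|^2\, \mathrm{d}t]$ subject to the SDE constraint (\ref{sde}). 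By Theorem \ref{th01} the KL minimizer is the unique measure $\mathbf{Q}^*$, hence the optimal control $\mathbf{u}^*$ is precisely the drift of $\mathbf{Q}^*$.

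Second, I would identify this drift explicitly. Let $q_t$ denote the density of $\mathbf{Q}^*_t$, so that $q_t = f_t g_t$ by the representation recalled before (\ref{drift}). The law of the optimal diffusion satisfies the Fokker--Planck equation
\begin{equation*}
\partial_t q_t \;=\; -\nabla\!\cdot\!(\mathbf{u}^*_t\, q_t) + \tfrac{\tau}{2}\,\Delta q_t.
\end{equation*}
Substituting $q_t = f_t g_t$ on both sides and using the product rule together with the time-harmonic equations $\partial_t f_t = \tfrac{\tau}{2}\Delta f_t$ and $\partial_t g_t = -\tfrac{\tau}{2}\Delta g_t$, every term involving $\Delta f_t$ or $\Delta g_t$ cancels after expansion, and what remains is $\nabla\!\cdot\!(\mathbf{u}^*_t q_t) = \tau\,\nabla\!\cdot\!(f_t \nabla g_t)$. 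Matching the divergences and using $q_t = f_t g_t$ yields $\mathbf{u}^*_t = \tau\,\nabla g_t / g_t = \tau\,\nabla\log g_t$, which coincides with (\ref{drift}) once $g_t$ is written via the heat kernel representation $g_t(\mx) = \int h_\tau(t,\mx,1,\my) g_1(\my)\,\mathrm{d}\my$.

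The main obstacle will be the careful application of Girsanov to the unbounded reference $\mathbf{P}_{\tau}$: since $\mathbf{P}_{\tau}$ has Lebesgue marginal, the decomposition above requires $\mathbb{D}_{\mathrm{KL}}(\mu\|\mathscr{L})<\infty$ and that the candidate drifts induce a finite-energy exponential martingale, which is exactly encoded in the admissible class $\mathcal{U}$. A secondary technical point is passing from equality of divergences to pointwise equality of drifts; this is legitimate because finite-energy controls uniquely determine the Fokker--Planck velocity field $q_t$-a.e., so the matching step above is unambiguous.
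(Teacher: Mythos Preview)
Your proposal is correct and follows the standard route to this result: reduce the stochastic control problem to the static Schr\"odinger Bridge problem via Girsanov, invoke Theorem~\ref{th01} for the optimizer $\mathbf{Q}^*$, and then read off the drift by matching the Fokker--Planck equation against the harmonic system $\partial_t f_t = \tfrac{\tau}{2}\Delta f_t$, $\partial_t g_t = -\tfrac{\tau}{2}\Delta g_t$. The algebra in the second step checks out: $\partial_t(f_tg_t) - \tfrac{\tau}{2}\Delta(f_tg_t) = -\tau\nabla\!\cdot\!(f_t\nabla g_t)$, giving $\mathbf{u}_t^* = \tau\nabla\log g_t$ as claimed.

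The paper does not give an argument at all; its proof of Theorem~\ref{th02} is a one-line deferral to \cite{dai1991stochastic}. So there is no approach to compare against: you have supplied a genuine proof where the paper simply cites the literature. What your sketch adds is exactly the content one would want if the goal were a self-contained treatment, and the two technical caveats you flag (handling the unbounded reference $\mathbf{P}_\tau$ in Girsanov, and uniqueness of the Fokker--Planck velocity field $q_t$-a.e.) are the right places to be careful. If anything, you could make the first step slightly cleaner by disintegrating $\mathbf{P}_\tau$ as $\int \mathbf{W}^{\mx}_\tau\, \mathrm{d}\mx$ and applying Girsanov fiberwise against $\mathbf{W}^{\mx}_\tau$, which sidesteps the issue of $\mathbf{P}_\tau$ being $\sigma$-finite rather than a probability measure.
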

According to Theorem \ref{th02}, the dynamics determined by the SDE in (\ref{sde}) with a time-varying drift term $\mathbf{u}^*_{t}$ in (\ref{drift}) will make the particles sampled from the initial distribution $\mu$ evolve to the particles drawn from the target distribution $\nu$ in the unit time interval. This nice property is what we need in generative learning because we want to learn the underlying target distribution $\nu$ via pushing forward a simple reference distribution $\mu$. Theorem \ref{th02} also indicates that such a solution has minimum energy in terms of quadratic cost.

\section{Generative Learning via Schr\"{o}dinger Bridge}
In generative learning, we observe i.i.d. data $\mx_1, ..., \mx_n$ from an unknown distribution $p_{\mathrm{data}}\in \mathcal{P}(\mathbb{R}^d)$.
The underlying distribution  $p_{\mathrm{data}}$ often has multi-modes or lies on  a low-dimensional manifold, which may cause difficulty to learn from simple distribution such  as Gaussian or Dirac measure supported on a single point.  To make the generative learning    task easy to handle,   we can first learn a smoothed version of $p_{\mathrm{data}}$ from the simple reference distribution, say $$q_{\sigma}(\mx) = \int p_{\mathrm{data}}(\my)\Phi_{\sigma}(\mx-\my)\mathrm{d}\my,$$ where  $\Phi_{\sigma}(\cdot)$ is the density of $\mathscr{N}(\mathbf{0}, \sigma^2\mI)$, the  variance of Gaussian noise  $\sigma^2$  controls the smoothness of $q_{\sigma}$. Then we learn $p_{\mathrm{data}}$ starting from $q_{\sigma}$.
At the population level, this idea can be done via Schr\"{o}dinger Bridge from the point of view of the stochastic control problem (Theorem \ref{th02}).
%
 To be precise, we have the following theorem.
\begin{thm}
\label{thm1}
Define the density ratio $f(\mx)=\frac{q_{\sigma}(\mx)}{\Phi_{\sqrt{\tau}}(\mx)}$.
Then for the SDE
\begin{equation}
\label{stg1}
\mathrm{d}\mx_t = \tau\nabla\log\mathbb{E}_{\mz\sim\Phi_{\sqrt{\tau}}}[f(\mx_t+\sqrt{1-t}\mz)] \mathrm{d}t+\sqrt{\tau}\mathrm{d}\mw_t
\end{equation}
with initial condition $\mx_0 = \mzero$, we have $\mx_1 \sim q_{\sigma}(\mx)$.

And, for the SDE
\begin{equation}
\label{stg2}
\mathrm{d}\mx_t = \sigma^2\nabla\log q_{\sqrt{1-t}\sigma}(\mx_t)\mathrm{d}t+\sigma\mathrm{d}\mw_t
\end{equation}
with initial condition $\mx_0 \sim q_{\sigma}(\mx)$, we have $\mx_1 \sim p_{\mathrm{data}}(\mx)$.
\end{thm}

According to Theorem \ref{thm1}, at the population level, the target $p_{\mathrm{data}}$ can be learned from the Dirac mass supported at $\mathbf{0}$  through two SDEs (\ref{stg1}) and  (\ref{stg2}) in the unit time interval [0,1].
The  main feature of the SDEs (\ref{stg1}) and  (\ref{stg2}) is that both  drift terms are time-varying,  which is different
classical Langevin SDEs  with time-invariant drift terms \citep{song2019generative,song2020improved}.   The benefit of time-varying drift terms is that
the dynamics  in   (\ref{stg1}) and  (\ref{stg2}) will push the initial distributions to the target distributions in
a unit time interval, while the classical Langevin SDE needs time to go to infinity.

\subsection{Estimation of the drift terms}
Based on Theorem \ref{thm1}, we can  run the Euler-Maruyama method to solve the  SDEs (\ref{stg1}) and  (\ref{stg2}) and get particles  that
approximately drawn from the targets \cite{higham2001algorithmic}. However,  the drift terms in Theorem \ref{thm1} depend on the underlying target.
To make the Euler-Maruyama method practical, we need to estimate the two drift terms in  (\ref{stg1}) and  (\ref{stg2}).
 In Eq. (\ref{stg1}), some calculation shows that
\begin{align}\label{estdrf1}
& \nabla\log\mathbb{E}_{\mz\sim\Phi_{\sqrt{\tau}}}[f(\mx+\sqrt{1-t}\mz)] \nonumber \\
= & \frac{\mathbb{E}_{\mz\sim\Phi_{\sqrt{\tau}}}\left[f(\mx+\sqrt{1-t}\mz)\nabla\log f(\mx+\sqrt{1-t}\mz)\right]}{\mathbb{E}_{\mz\sim\Phi_{\sqrt{\tau}}}[f(\mx+\sqrt{1-t}\mz)]},
\end{align}
and
$$\nabla\log f(\mx) = \nabla\log q_{\sigma}(\mx) + \mx / \tau.$$
Let $\hat{f}$ and $\widehat{\nabla\log q_{\sigma}}$ be the estimators of the density ratio $f$  and the score of
 $q_{\sigma}(\mx)$, respectively. After plugging them into (\ref{estdrf1}),
 we can obtain an estimator of the drift term
  in  (\ref{stg1}) by computing the expectation with Monte Carlo approximation.


Now we consider obtaining the estimator of density ratio $\hat{f}$, via minimizing the logistic regression loss
$
\mathcal{L}_{\mathrm{logistic}}(r)
= \mathbb{E}_{q_{\sigma}(\mx)}\log (1+\exp(-r(\mx)))
 + \mathbb{E}_{\Phi_{\sqrt{\tau}}(\mx)}\log (1+\exp(r(\mx)))
$. By setting the first variation to zero, the optimal solution is given by
\begin{align*}
r^*(\mx)=\log\frac{q_{\sigma}(\mx)}{\Phi_{\sqrt{\tau}}(\mx)}.
\end{align*}
Therefore, given samples $\widetilde{\mx}_1,...,\widetilde{\mx}_n$ from $q_{\sigma}(\mx)$, which can be obtained by adding Gaussian noise drawn from $\Phi_{\sigma}$  on $\mx_1,..., \mx_n \sim p_{\mathrm{data}}$,  and samples $\mz_1,...,\mz_n$ from  $\Phi_{\sqrt{\tau}}$, we can estimate the density ratio $f(\mx)$ by
\begin{equation}\label{edrift1}
\hat{f}(\mx) = \exp(\hat{r}_{\phi}(\mx)),
\end{equation}
where $\hat{r}_{\phi} \in \mathcal{NN}_{\phi}$ is the neural network that minimizes the empirical loss:
\begin{align}\label{drest}
\hat{r}_{\phi}  \in {\arg\min}_{r_{\phi}\in \mathcal{NN}_{\phi}}\frac{1}{n}\sum_{i =1}^n[&\log(1+\exp(-r_{\phi}(\widetilde{\mx}_i))) \nonumber\\
& + \log(1+\exp(r_{\phi}(\mz_i)))].
\end{align}

Next, we consider estimating the time-varying drift term  in  (\ref{stg2}),  i.e., $\nabla\log q_{\sqrt{1-t}\sigma}(\mx)$ for $t \in [0, 1]$. To do so, we build a deep network as the score estimator for $\nabla\log q_{\tilde{\sigma}}(\mx)$ with $\tilde{\sigma}$ varying in  $[0,\sigma]$. 
\citet{vincent2011connection} showed that, explicitly matching the score by minimizing the objective
$$\frac12\mathbb{E}_{q_{\tilde{\sigma}}(\mx)}\|\ms_{\theta}(\mx, \tilde{\sigma})-\nabla_{\mx}\log q_{\tilde{\sigma}}(\mx)\|^2$$
is equivalent to minimizing the denoising score matching objective
\begin{align*}
&\frac12\mathbb{E}_{p_{\mathrm{data}}(\mx)}\mathbb{E}_{\mathscr{N}(\tilde{\mx}; \mx, \tilde{\sigma}^2\mI)}\|\ms_{\theta}(\tilde{\mx}, \tilde{\sigma})-\nabla_{\tilde{\mx}}\log q_{\tilde{\sigma}}(\tilde{\mx}|\mx)\|^2\\
=&\frac12\mathbb{E}_{p_{\mathrm{data}}(\mx)}\mathbb{E}_{\mathscr{N}(\tilde{\mx}; \mx, \tilde{\sigma}^2\mI)}\left\|\ms_{\theta}(\tilde{\mx}, \tilde{\sigma})+\frac{\tilde{\mx}-\mx}{\tilde{\sigma}^2}\right\|^2.
\end{align*}
Thus we build the score estimator following \citet{song2019generative,song2020improved} as
\begin{equation}\label{dse1}
\hat{\ms}_{\theta}(\cdot,\cdot)  \in \arg\min_{\ms_{\theta} \in \mathcal{NN}_{\theta}} {\mathcal{L}}(\theta),
\end{equation}
 \begin{equation}\label{dse2}
 {\mathcal{L}}(\theta)=  \frac{1}{m}\sum_{j=1}^{m}\lambda(\tilde{\sigma}_{j}){\mathcal{L}}_{\tilde{\sigma}_j}(\theta),
\end{equation}
$${\mathcal{L}}_{\tilde{\sigma}_j}(\theta) = \sum_{i=1}^n \left\|\ms_{\theta}(\mx_i+\mz_i,\tilde{\sigma})+\frac{\mz_i}{\tilde{\sigma}^2_j}\right\|^{2}/n,$$
variance terms $\tilde{\sigma}_j^2,j=1,\dots,m$ are i.i.d. samples from $\mathrm{Uniform}[0,\sigma^2]$ with sample size $m$,  $\lambda(\tilde{\sigma})=\tilde{\sigma}^2$ is a nonnegative scaling factor to ensure all the  summands in (\ref{dse2})
have the same scale, and $\mz_i, i = 1,...,n$ are i.i.d. from $\Phi_{\tilde{\sigma}}$.

At last, we establish the consistencies of the deep density ratio estimator $\hat{f}(\mx) = \exp(\hat{r}_{\phi}(\mx))$  and the deep score estimator $\widehat{\nabla\log q_{\tilde{\sigma}}}(\mx) = \hat{\ms}_{\theta}(\mx;\tilde{\sigma})$ in Theorem \ref{thdes1} and Theorem \ref{thdes2}, respectively.
\begin{thm}\label{thdes1}
Assume that the  support of  $p_{\mathrm{data}}(\mx)$  is   contained in a compact set, and $f(\mx)$ is Lipschitz continuous and bounded.  Set the depth $\mathcal{D}$, width $\mathcal{W}$, and size $\mathcal{S}$ of  $\mathcal{NN}_{\phi} $
as $$\mathcal{D}=\mathcal{O} (\log (n)), \mathcal{W}= \mathcal{O}( n^{\frac{d}{2(2+d)}} / \log (n)),$$  $$\mathcal{S}=\mathcal{O}(n^{\frac{d-2 }{d+2 }} \log (n)^{-3}).$$ Then $\mathbb{E}[\|\hat{f}(\mx)-f(\mx)\|_{L^2(p_{\mathrm{data}})}]\rightarrow0$  as $n\rightarrow \infty.$
\end{thm}
\begin{thm}\label{thdes2}
Assume that  $p_{\mathrm{data}}(\mx)$ is differentiable with  bounded support, and $ \nabla\log q_{\tilde{\sigma}}(\mx)$ is Lipschitz continuous and bounded for $(\tilde{\sigma},\mx)\in [0,\sigma] \times \mathbb{R}^d$.  Set the depth $\mathcal{D}$, width $\mathcal{W}$, and size $\mathcal{S}$ of  $\mathcal{NN}_{\theta} $
as $$\mathcal{D}=\mathcal{O} (\log (n)), \mathcal{W}= \mathcal{O}( \max\{n^{\frac{d}{2(2+d)}} / \log (n), d\}),$$  $$\mathcal{S}=\mathcal{O}(d n^{\frac{d-2 }{d+2 }} \log (n)^{-3}).$$ Then
$\mathbb{E}[\| \|\widehat{\nabla\log q_{\tilde{\sigma}}}(\mx)-\nabla\log q_{\tilde{\sigma}}(\mx)\|_2\|_{L^2(q_{\tilde{\sigma}})}]\rightarrow0$  as $m,n\rightarrow \infty.$
\end{thm}
\subsection{Schr\"{o}dinger Bridge Algorithm}
With the two estimators $\hat{f}$ and $\widehat{\nabla\log q_{\tilde{\sigma}}}$
, we can use the Euler-Maruyama method to approximate numerical solutions of SDEs (\ref{stg1}) and (\ref{stg2}).
Let $N_1$ and $N_2$  be the number of uniform grids on the time interval  $[0, 1]$. In stage 1, we start from $\mathbf{0}$ and run   Euler-Maruyama for (\ref{stg1}) with the estimated $\hat{f}$ and $\widehat{\nabla\log q_{\sigma}}$ in the drift term to obtain samples that follow $q_{\sigma}$ approximately. In stage 2, we start with the samples from $q_{\sigma}$ and run another Euler-Maruyama  for (\ref{stg2}) with the estimated time-varying drift term $\widehat{\nabla\log q_{\tilde{\sigma}}}$.
We summarize our two-stage Schr\"{o}dinger Bridge   algorithm in \ref{alg:sampling1}.
\begin{algorithm}[tb]
   \caption{Sampling}
   \label{alg:sampling1}
   \begin{algorithmic}
   \STATE {\bfseries Input:} $\hat{f}(\cdot)$, $\hat{\ms}_{\theta}(\cdot, \cdot)$, $\tau$, $\sigma$, $N_1$, $N_2$, $N_3$
   \end{algorithmic}
   \begin{tcolorbox}[colframe=peach-orange, colback=peach-orange, boxrule=0pt,arc=5pt,left=0pt,right=0pt,top=0pt,bottom=0pt,boxsep=0pt]
   {\parbox{8cm} {\vbox{
   \begin{algorithmic}
   \STATE Initialize particles as $\mx_0 = \mzero$
   \RIGHTCOMMENT{\textcolor{aurometalsaurus}{stage 1}}
   \FOR{$k=0$ {\bfseries to} $N_1-1$}
   \STATE Sample $\{\mz_i\}_{i=1}^{2N_3}, \meps_k \sim \mathscr{N}(\mathbf{0}, \mI)$
   \STATE $\tilde{\mx}_i = \mx_k+\sqrt{\tau\left(1-\frac{k}{N_1}\right)}\mz_i$, $i = 1,..., N_3$
   \STATE $\mb(\mx_k)=\frac{\sum_{i=1}^{N_3} \hat{f}(\tilde{\mx}_i)[\hat{\ms}_{\theta}(\tilde{\mx}_i, \sigma)+\sqrt{\left(1-\frac{k}{N_1}\right)/\tau}\mz_i]}{\sum_{i=N_3+1}^{2N_3} \hat{f}(\tilde{\mx}_i)} + \frac{\mx_k}{\tau}$.
   \STATE $\mx_{k+1} = \mx_k + \frac{\tau}{N_1}\mb(\mx_k)+\sqrt{\frac{\tau}{N_1}}\meps_k.$
   \ENDFOR
   \end{algorithmic}}}}\end{tcolorbox}

   \begin{tcolorbox}[colframe=piggypink, colback=piggypink, boxrule=0pt,arc=5pt,left=0pt,right=0pt,top=0pt,bottom=0pt,boxsep=0pt]
   {\parbox{8cm} {\vbox{
   \begin{algorithmic}
   \STATE Set $\mx_0 = \mx_{N_1}$
   \RIGHTCOMMENT{\textcolor{aurometalsaurus}{stage 2}}
   \FOR{$k=0$ {\bfseries to} $N_2-1$}
   \STATE Sample $\meps_k \sim \mathscr{N}(\mathbf{0}, \mI)$
   \STATE $\mb(\mx_k)=\hat{\ms}_\theta(\mx_k, \sqrt{1-\frac{k}{N_2}}\sigma)$
   \STATE $\mx_{k+1} = \mx_k + \frac{\sigma^2}{N_2}\mb(\mx_n)+\frac{\sigma}{\sqrt{N_2}}\meps_k$
   \ENDFOR
   \end{algorithmic}}}}
   \end{tcolorbox}
   \begin{algorithmic}
   \STATE {\bfseries return} $\mx_{N_2}$
   \end{algorithmic}
\end{algorithm}

Interestingly, the second stage of our proposed Schr\"{o}dinger Bridge algorithm  \ref{alg:sampling1} recovers the reverse-time Variance Exploding (VE) SDE algorithm proposed in  \citet{song2021scorebased}, if their annealing scheme is chosen to be linear as $\sigma^2(t) = \sigma^2 \cdot t$.
 From this point of view, our Schr\"{o}dinger Bridge algorithm
also provides deeper understanding of annealing score based sampling, i.e., the reverse-time VE SDE algorithm (with a proper annealing scheme) proposed by
\citet{song2021scorebased} is equivalent to the Schr\"{o}dinger Bridge SDE (\ref{stg2}).

\subsection{Consistency of Schr\"{o}dinger Bridge Algorithm}
Let $$ D_1(t,\mx) =  \nabla\log\mathbb{E}_{\mz\sim\Phi_{\sqrt{\tau}}}[f(\mx+\sqrt{1-t}\mz)],$$
$$ D_2(t,\mx) =  \nabla\log q_{\sqrt{1-t}\sigma}(\mx)$$
be the drift terms. Denote $$h_{\sigma,\tau}(\mx_1,\mx_2) = \exp{\left(\frac{\|\mx_1\|^2}{2\tau}\right)}p_{\mathrm{data}}(\mx_1 + \sigma \mx_2).$$
Now we establish the consistency of our Schr\"{o}dinger Bridge Algorithm which can drive a simple distribution to the target one. To this end, we need the following assumptions:
\begin{assumption}\label{A1}
$\mathrm{supp}(p_{\mathrm{data}})$ is contained in a ball with radius $R$, and $p_{\mathrm{data}}>c>0$ on its support.
\end{assumption}
\begin{assumption}\label{A2}
$\|D_i(t,\mx)\|^2 \leq C_1(1+\|\mx\|^2)$, $\forall \mx \in \mathrm{supp}(p_{\mathrm{data}})$, $t\in[0,1]$, where $C_1\in\mathbb{R}$ is a constant.
\end{assumption}
\begin{assumption}\label{A3}
$\|D_i(t_1,\mx_1) - D_i(t_2,\mx_2)\| \leq C_2(\|\mx_1-\mx_2\|+|t_1-t_2|^{1/2})$, $\forall \mx_1,\mx_2 \in \mathrm{supp}(p_{\mathrm{data}}), t_1, t_2 \in[0,1]$.  $C_2\in\mathbb{R}$ is another constant.
\end{assumption}
\begin{assumption}\label{A4}
$h_{\sigma,\tau}(\mx_1,\mx_2)$, $\nabla_{\mx_1}h_{\sigma,\tau}(\mx_1,\mx_2)$, $p_{\mathrm{data}}$ and $\nabla p_{\mathrm{data}}$ are $L$-Lipschitz functions.
\end{assumption}

\begin{thm}\label{thm2}
Under Assumptions 1-4,
$$\mathbb{E}[\mathcal{W}_2(\mathrm{Law}(\mx_{N_2}), p_{\mathrm{data}})] \rightarrow 0, \ \ \mathrm{as} \ \  n,N_1,N_2,N_3 \rightarrow \infty,$$
where $\mathcal{W}_2$ is the 2-Wasserstein distance between two distributions.
\end{thm}
The consistency of the proposed Schr\"{o}dinger Bridge is mainly based on mild assumptions (such as  smoothness  and boundedness) without some restricted technical requirements that the target distribution has to be log-concave or fulfill the log-Sobolev inequality \cite{gao2020generative, arbel2019maximum, liutkus2019sliced,block2020generative}.

\section{Related Work}\label{relate}
We discuss connections and differences between our Schr\"{o}dinger Bridge approach and existing related works.

Most of existing generative models, such as VAEs, GANs and flow-based methods, parameterize a transform map with a neural network  $G$ that minimizes 
an integral probability metric.
Clearly, they are quite different from our proposal.

 Recently, particle methods  derived in the perspective of gradient flows in measure spaces or SDEs have  been studied   \citep{johnson18,gao2019deep,arbel2019maximum,song2019generative,song2020improved,song2021scorebased}.
 Here we clarify the main differences of our  Schr\"{o}dinger Bridge approach and the above mentioned particle methods.
  The proposals in \citep{johnson18,gao2019deep,arbel2019maximum} are derived based on  the surrogate of the geodesic interpolation \citep{gao2020generative,liutkus2019sliced,song2019generative}. 
They utilize the invariant measure of SDEs to model the  generative task, resulting in an iteration scheme that looks similar to our  Schr\"{o}dinger Bridge. However, the main difference lies that the drift terms of the Langevin SDEs in  \citep{song2019generative,song2020improved,block2020generative} are time-invariant in contrast to the time-varying drift term in our formulation.
   As shown in Theorem \ref{thm1}, the benefit of the time-varying drift term is essential: the SDE of  Schr\"{o}dinger Bridge runs on a unit time interval $[0,1]$ will recover the target distribution at the terminal time. However, the evolution measures of the above mentioned methods \cite{gao2019deep,arbel2019maximum,song2019generative,song2020improved, block2020generative,gao2020generative} only converge to the target when the time goes to infinity. Hence, some technical requirements are imposed to the target distribution, such as log-concave or the log-Sobolev inequality,
    to guarantee the consistency of  Euler-Maruyama discretization. However, these assumptions may often be too strong to hold in real data analysis.
    We proposed a two-stage approach to make the Schr\"{o}dinger Bridge formulation work in practice. We drive the Dirac distribution to a smoothed version of underlying distribution $p_{\mathrm{data}}$ in stage 1 and then learn $p_{\mathrm{data}}$ from the smoothed version in stage 2. Interestingly, the second stage of the proposed Schr\"{o}dinger Bridge algorithm recovers the reverse-time Variance Exploding SDE algorithm (VE SDE) \cite{song2021scorebased} when their annealing scheme is linear, i.e., $\sigma^2(t)=\sigma^2\cdot t$. Therefore, the analysis developed here also provides a theoretical justification of why the reverse-time VE SDE algorithm works well. However, their setting is $\sigma^2(t)=(\sigma^2_{\max})^t\cdot(\sigma^2_{\min})^{1-t}$. This implies that the end-time distribution of the reverse-time VE SDE is still a smoothed one (with noise level $\sigma_{\min}$), resulting in a barrier of establishing the consistency. Another fundamental difference between our approach and reverse-time VE SDE is that, the reverse-time VE SDE also need a smoothed distribution as the input of theoretically, but they only approximately use large Gaussian noises as the initialization of the denoising process. Stage 1 ensures our algorithm to learn samples from the smoothed data distribution in unit time, which is necessary for model consistency.

\section{Experiments}
In this section, we first employ two-dimensional toy examples to show the ability of our algorithm to learn multimodal distributions which may not satisfy log-Sobolev inequality. Next, we show that our algorithm is able to generate realistic image samples. We also demonstrate the effective of our approach by image interpolation and image inpainting. We use two benchmark datasets including CIFAR-10 \cite{krizhevsky2009learning} and CelebA \cite{celebA}. For CelebA, the images are center-cropped and resized to $64\times64$. Both of the datasets are normalized by first rescaling the pixel values to $[0, 1]$, and then substracting a mean vector $\bar \mx$ estimated using 50,000 samples to center the data distributions at the origin. In our algorithm, the particles start from $\delta_{\mzero}$. To improve the performance, it is helpful to align the sample mean to the origin. After generation, we add the image mean $\bar\mx$ back to the generated samples. More details
on the hyperparameter settings and  network architectures, and some additional  experiments are provided in the supplementary material.

\subsection{Setup}
For the noise level $\sigma$, we set $\sigma = 1.0$ in this paper for generative tasks including both 2D example and CIFAR-10. In fact, the performance of our algorithm is insensitive to the choice of $\sigma$ when $\sigma$ is given in a reasonable range (the results with other $\sigma$ values are shown in the supplementary material). We find that the performance of our algorithm is often among the best by setting $\sigma = 1.0$ for $32\times32$ images. The reason is that a very small $\sigma$ can not make $q_{\sigma}$ smooth enough and harms the performance of stage 1 while a very large $\sigma$ brings more difficulty for our stage 2 to anneal the noise level down. For larger images like CelebA, as the dimensionality of samples is higher, we increase the noise level $\sigma$ to $2.0$.
We also compare the results by varying the value of the variance of the Wiener measure $\tau$ for image generation. The numbers of grids are chosen as $N_1=N_2=1,000$ for stage 1 and stage 2. We use sample size $N_3=1$ to estimate the drift term in stage 1 for both 2D toy examples and real images. In general, we find that a larger sample size $N_3$ does not significantly improve sample quality.

\subsection{Learning 2D Multimodal Distributions}
We demonstrate that our algorithm can effectively learn multimodal distributions. The distribution we adopt is a mixture of Gaussians with 6 components. Each of the components has a mean with a distance equaling to $5.0$ from the origin, and a variance $0.01$, as shown in Fig. \ref{2d-reals}. The components are relatively far away from each other. It is a very challenging task for GANs to learn this multimodal distribution because this distribution may not satisfy the log-Sobolev inequality. Fig. \ref{2d-gan} shows the failure of vanilla GAN, where several modes are missed. However, Fig. \ref{2d-stage1} and \ref{2d-stage2} show that our algorithm is able to stably generate samples from the multimodal distribution without ignoring any of the modes. In Fig. \ref{2d-v}, we compare the ground truth velocity fields induced by drift terms $D_1(t, \mx), D_2(t, \mx)$ with the estimated velocity fields at the end of each stage. Our estimated drift terms are close to the ground truth except for the region with nearly zero probability density.
\begin{figure}[ht]
\vskip 0.2in
\begin{center}
\subfigure[]{\includegraphics[width=0.24\columnwidth]{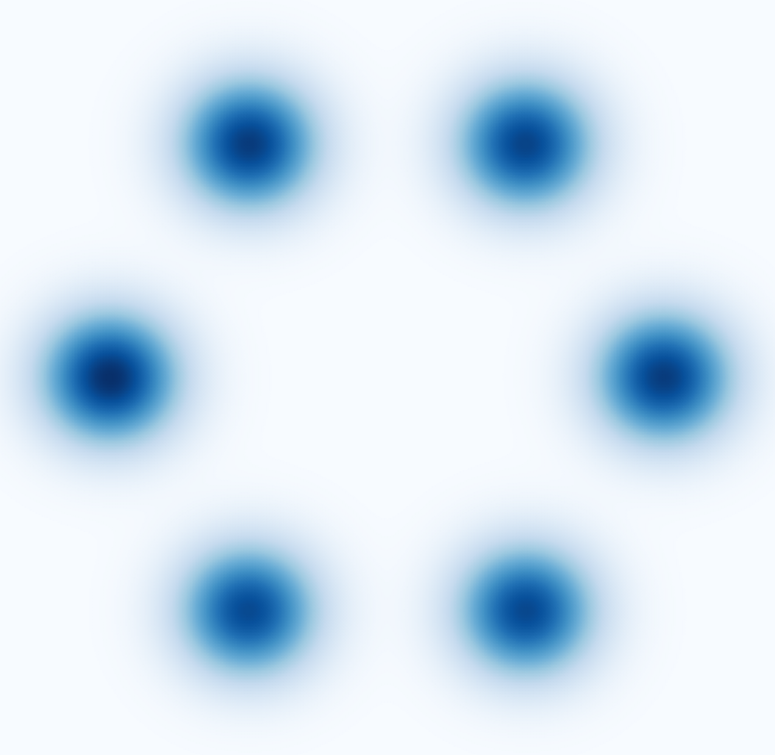}\label{2d-reals}}
\subfigure[]{\includegraphics[width=0.24\columnwidth]{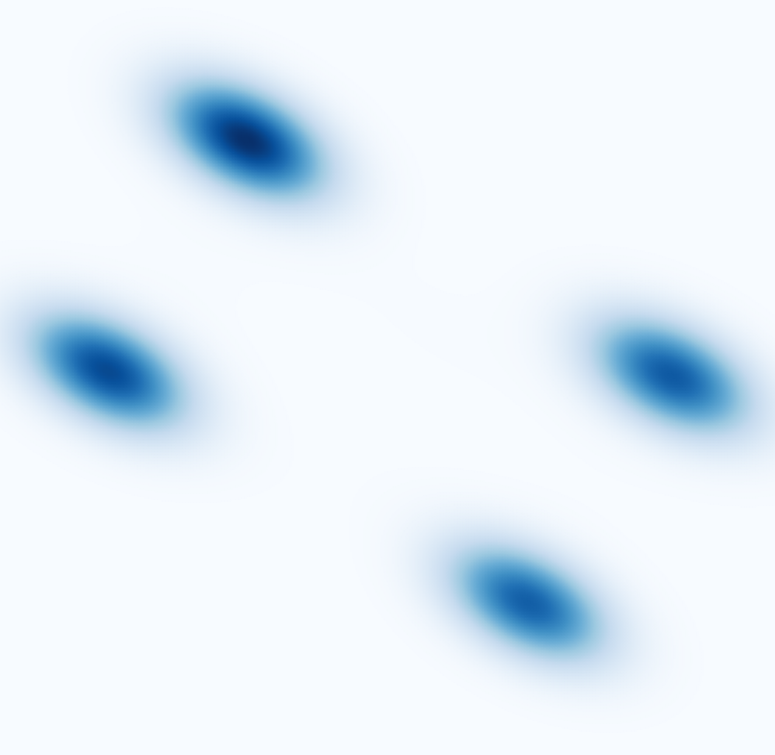}\label{2d-gan}}
\subfigure[]{\includegraphics[width=0.24\columnwidth]{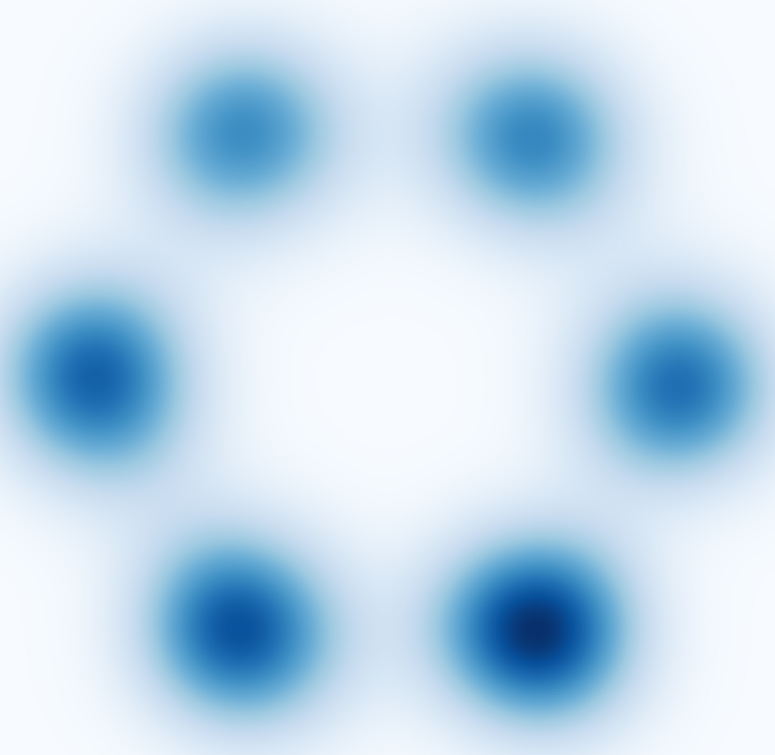}\label{2d-stage1}}
\subfigure[]{\includegraphics[width=0.24\columnwidth]{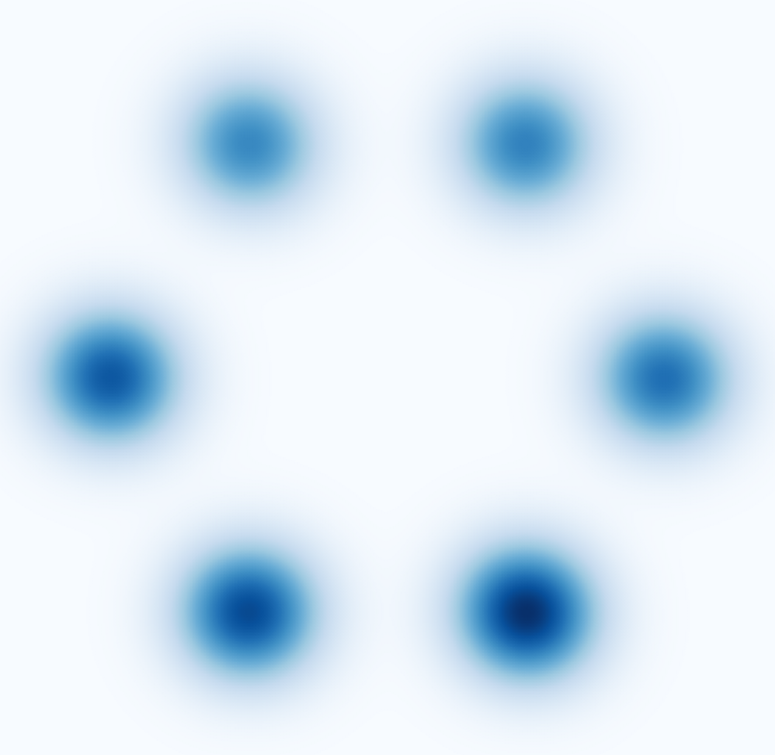}\label{2d-stage2}}
\caption{KDE plots for mixture of Gaussians with 5,000 samples. (a). Ground truth. (b). Distribution learned by vanilla GAN. (c). Distribution learned by the proposed method after stage 1 ($\tau=5.0$). (d). Distribution learned by the proposed method after stage 2.}
\label{2d}
\end{center}
\vskip -0.2in
\end{figure}

\begin{figure}[ht]
\vskip 0.2in
\begin{center}
\subfigure[]{\includegraphics[width=0.24\columnwidth]{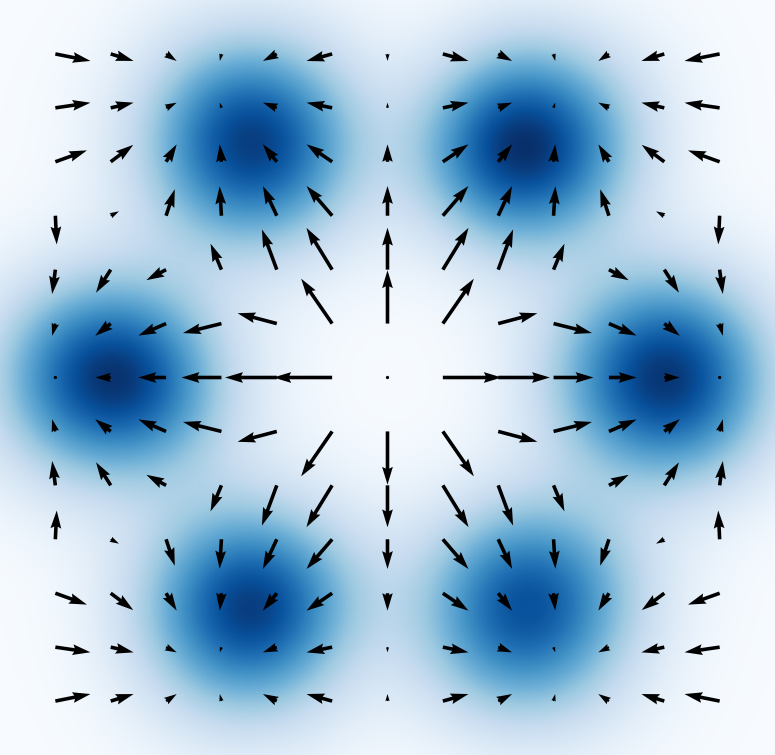}\label{real1}}
\subfigure[]{\includegraphics[width=0.24\columnwidth]{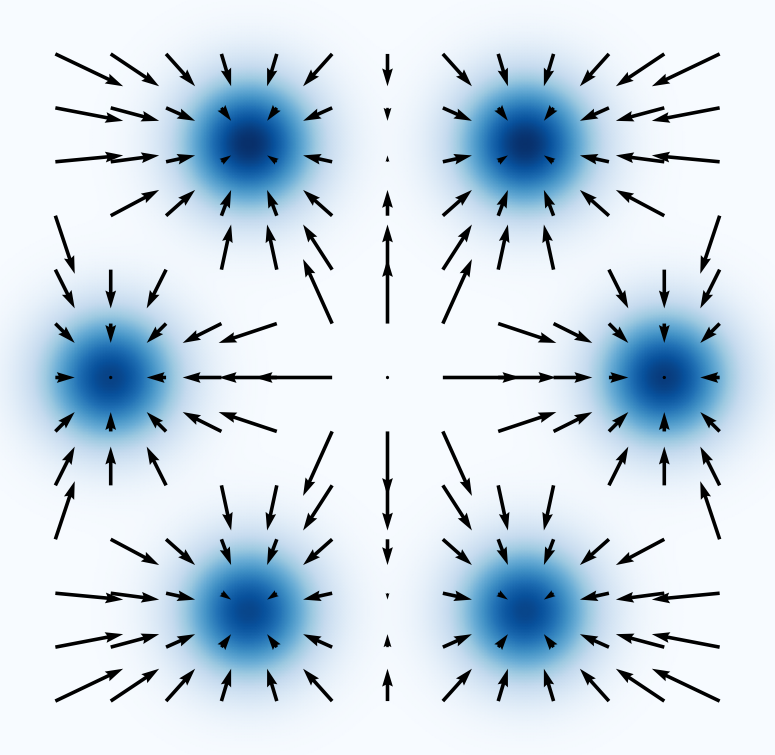}\label{real2}}
\subfigure[]{\includegraphics[width=0.24\columnwidth]{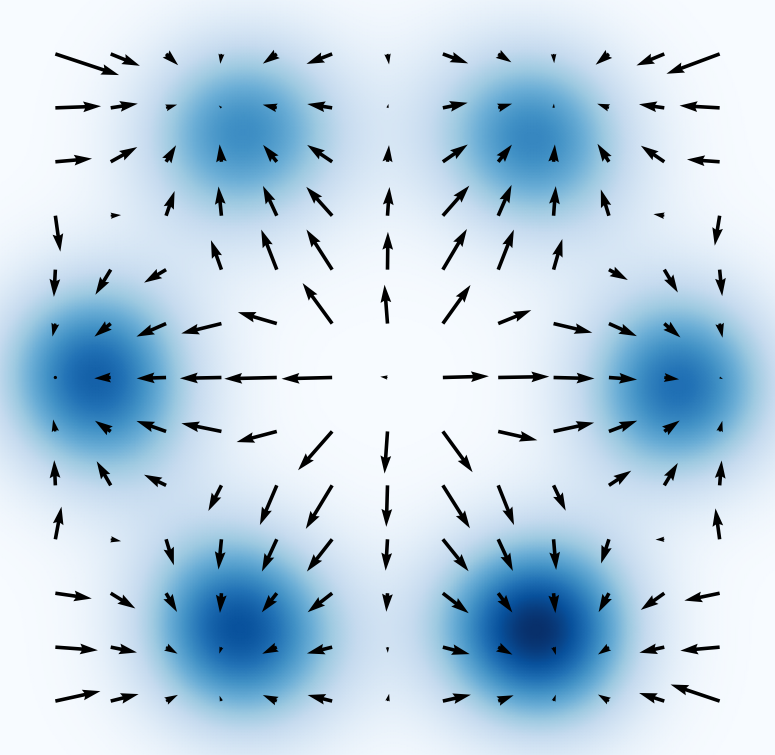}\label{est1}}
\subfigure[]{\includegraphics[width=0.24\columnwidth]{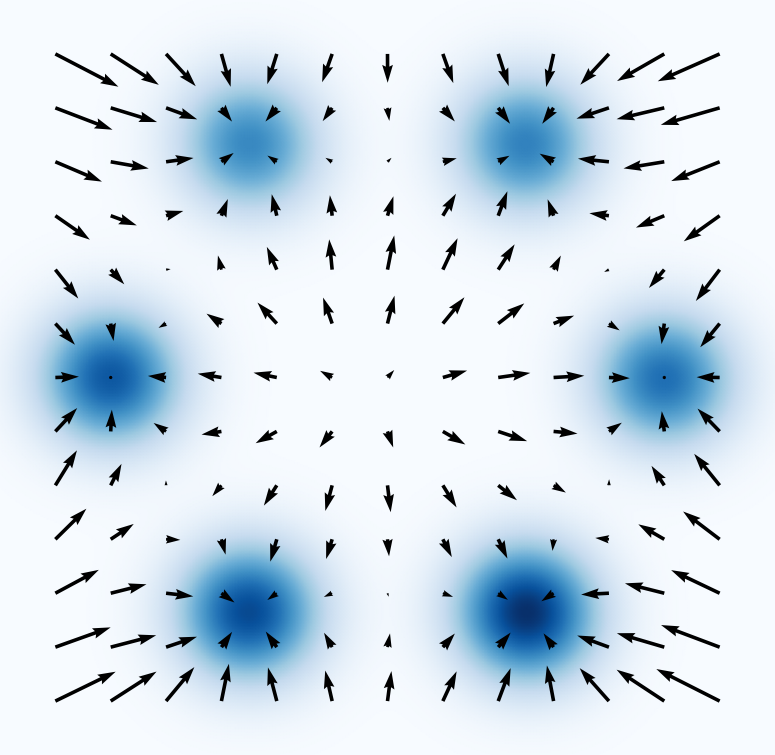}\label{est2}}
\caption{Velocity fields. (a) and (b). Ground truth velocity fields at the end of stages 1 and 2. (c) and (d). Estimated velocity fields at the end of stages 1 and 2.}
\label{2d-v}
\end{center}
\vskip -0.2in
\end{figure}

\subsection{Effectiveness of Two Stages for Image Generation}
Fig. \ref{particle-evolution} shows the particle evolution on CIFAR-10 in our algorithm, where the two stages are annotated with corresponding colors. It shows that our two-stage approach provides a valid path for the particles to move from the origin to the target distribution. A natural question is: what are the roles of stage 1 and stage 2 in the generative modeling, respectively?
In this subsection, we design experiments to answer this question.
\begin{figure}[ht]
\vskip 0.2in
\begin{center}
\centerline{\includegraphics[width=0.9\columnwidth]{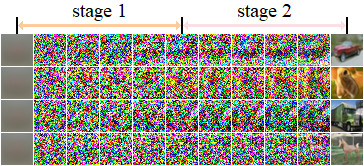}}
\caption{Particle evolution on CIFAR-10. The column in the center indicates particles obtained after stage 1.}
\label{particle-evolution}
\end{center}
\vskip -0.2in
\end{figure}

We first evaluate the role of stage 1. For this purpose, we skip stage 1 but simply run stage 2 using non-informative Gaussian noises as the initial condition.
Fig. \ref{effect-stage1} shows that the approach only using  stage 2 generates worse image samples than the proposed two-stage approach. These results indicate that the role of stage 1 is to provide a better initial reference for stage 2. 
The role of stage 2 is easier to check: it is a Schr\"{o}dinger Bridge from $q_{\sigma}(\mx)$ to the target distribution $p_{\mathrm{data}}(\mx)$.
In Fig. \ref{effect-stage2}, we perturb real images with Gaussian noises of variance $\sigma^2=1.0$. Our stage 2 anneals the noise level to zero and drives the particles to the data distribution. Moreover, Fig. \ref{effect-stage2} also indicates that stage 2 not only recover the original images, but also generate images with some extent of diversity.
\begin{figure}[ht]
\vskip 0.2in
\begin{center}
\subfigure[]{\includegraphics[width=0.24\columnwidth]{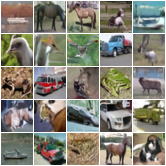}}
\subfigure[]{\includegraphics[width=0.24\columnwidth]{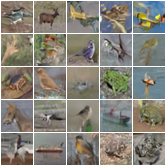}}
\subfigure[]{\includegraphics[width=0.24\columnwidth]{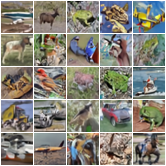}}
\subfigure[]{\includegraphics[width=0.24\columnwidth]{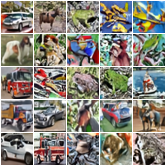}}
\caption{Comparison with random image samples. (a). Samples produced by our algorithm with $\tau = 2.0$ (FID = 12.32). (b), (c), (d). Samples produced by stage 2 taking Gaussian noises with variance $1.0$ (FID = 32.60), $1.5$ (FID = 24.76), $2.0$ (FID = 51.21) as input respectively.}
\label{effect-stage1}
\end{center}
\vskip -0.2in
\end{figure}
\begin{figure}[ht]
\vskip 0.2in
\begin{center}
\includegraphics[width=0.8\columnwidth]{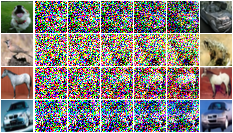}
\caption{Denoising with stage 2 for perturbed real images.}
\label{effect-stage2}
\end{center}
\vskip -0.2in
\end{figure}

\subsection{Results}
\label{results}

In this subsection, we evaluate our proposed approach on benchmark datasets. Fig. \ref{samples} presents the generated samples of our algorithm on CIFAR-10 and CelebA. Visually, our algorithm produces high-fidelity image samples which are competitive with real images. For quantitive evaluation, we employ Fréchet Inception Distance (FID) \cite{heusel17} and Inception Score (IS) \cite{salimans2016improved} to compare our method with other benchmark methods.

We first compare the FID and IS on CIFAR-10 dataset, with $\tau$ increasing from $1.0$ to $4.0$ using 50,000 generated samples.
Note that $\tau$ is the variance of the prior Wiener measure in stage 1, so it controls the behavior of the particle evolution from $\delta_{\mzero}$ to $q_{\sigma}$, and has an impact on the numerical results. To make the prior reasonable, we let $\tau_{\min}=\sigma^2=1.0$.
The reason is that, if the particles strictly follow the prior law of the Brownian diffusion with variance $\tau$ in stage 1, the end time marginal will be $\mathscr{N}(\mzero, \tau\mI)$. A good choice of the prior should make $\mathscr{N}(\mzero, \tau\mI)$ close to the end time marginal $q_{\sigma}$ which we are interested about.
As shown in Table \ref{fid-gamma}, our algorithm achieves the best performance at $\tau=2.0$. The results also indicate that our algorithm is stable with respect to the value of variance of the prior Wiener measure $\tau$ when $\tau \ge 2.0$. In general, reasonable choices of $\tau$ would result in relatively good generating performance.
\begin{table}[t]
\caption{FID and Inception Score on CIFAR-10 for $\tau \in [1, 4]$.}
\label{fid-gamma}
\vskip 0.15in
\begin{center}
\begin{small}
\begin{sc}
\begin{tabular}{lccccccc}
\toprule
$\tau$ & $1.0$ & $1.5$ & $2.0$ & $2.5$ \\
\midrule
FID & 37.20 & 20.49 &{\bf 12.32}&12.90\\
IS & 6.52 & 7.65 &{\bf 8.14}&7.99\\
\bottomrule
\toprule
$\tau$ & $3.0$ & $3.5$ & $4.0$ \\
\midrule
FID & 13.97&14.49&14.67\\
IS & 7.98&8.03&8.10\\
\bottomrule
\end{tabular}
\end{sc}
\end{small}
\end{center}
\vskip -0.1in
\end{table}

Table \ref{scores-cifar} presents the FID and IS of our algorithm evaluating with 50,000 samples, as well as other state-of-the-art generative models including WGAN-GP \cite{gulrajani17}, SN-SMMDGAN\cite{arbel18}, SNGAN \cite{miyato18}, NCSN \cite{song2019generative} and NCSNv2 \cite{song2020improved} on CIFAR-10. Our algorithm attains an FID score of 12.32 and an Inception Score of 8.14, which are competitive with the referred baseline methods. The quantitive results demonstrate the effectiveness of our algorithm.

\begin{figure}[ht]
\vskip 0.2in
\begin{center}
\subfigure[]{\includegraphics[width=0.45\columnwidth]{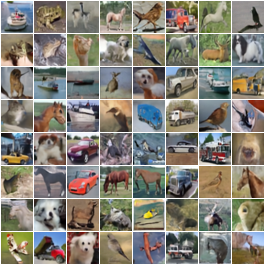}}
\subfigure[]{\includegraphics[width=0.45\columnwidth]{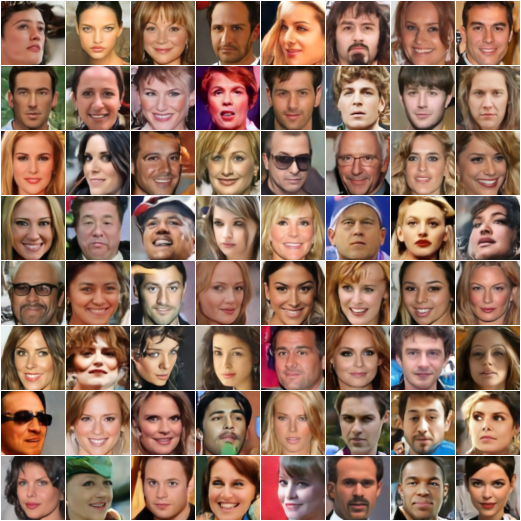}}
\caption{Random samples on CIFAR-10 ($\sigma=1.0$, $\tau=2.0$) and CelebA ($\sigma=2.0$, $\tau=8.0$).}
\label{samples}
\end{center}
\vskip -0.2in
\end{figure}

\begin{table}[t]
\caption{FID and Inception Scores on CIFAR-10.}
\label{scores-cifar}
\vskip 0.15in
\begin{center}
\begin{small}
\begin{sc}
\begin{tabular}{lcc}
\toprule
Models & FID & IS {}\\
\midrule
WGAN-GP &36.4&7.86$\pm$0.07\\
SN-SMMDGAN &25.0&7.3$\pm$0.1\\
SNGAN &21.7&8.22$\pm$0.05\\
NCSN &25.32&8.87$\pm$0.12\\
NCSNv2 &10.87&8.40$\pm$0.07\\
\midrule
\textbf{Ours} &{\bf 12.32}&{\bf 8.14$\pm$0.07}\\
\bottomrule
\end{tabular}
\end{sc}
\end{small}
\end{center}
\vskip -0.1in
\end{table}

\subsection{Image Interpolation and Inpainting with Stage 2}
To demonstrate usefulness of the proposed algorithm, we consider image interpolation and inpainting tasks. 

Interpolating images linearly in the data distribution $p_{\mathrm{data}}$ would induce artifacts. However, if we perturb the linear interpolation using a Gaussian noise with variance $\sigma^2$, and then use our stage 2 to denoise, we are able to obtain an interpolation without such artifacts. We find $\sigma^2 = 0.4$ is suitable for the image interpolation task for CelebA. Fig. \ref{interpolation} lists the image interpolation results. Our algorithm produces smooth image interpolation by gradually changing facial attributes.
\begin{figure}[ht]
\vskip 0.2in
\begin{center}
\centerline{\includegraphics[width=1.\columnwidth]{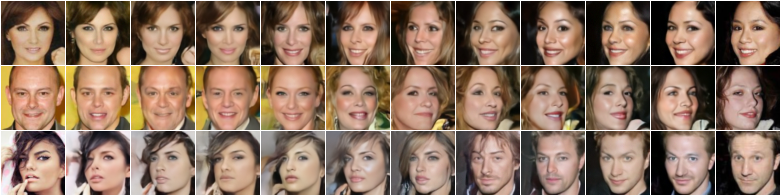}}
\caption{Image interpolation on CelebA. The first and last columns correspond to real images.}
\label{interpolation}
\end{center}
\vskip -0.2in
\end{figure}

The second stage can also be utilized for image inpainting with a little modification, inspired by the image inpainting algorithm with annealed Langevin dynamics in \cite{song2019generative}. Let $\mm$ be a mask with entries in $\{0, 1\}$ where $0$ corresponds to missing pixels. The idea for inpainting is very similar to interpolation. We treat $\mx \odot \mm + \sigma\meps$ as a sample from $q_{\sigma}$, where $\meps\sim\mathcal{N}(\mzero, \mI)$. Thus, we can use stage 2 to obtain samples from $p_{\mathrm{data}}$. The image inpainting procedure is given in algorithm \ref{alg:inpainting}, and the results are presented in Fig. \ref{inpainting}. Notice that we perturb $\my$ with $\sqrt{1-\frac{k+1}{N_2}}\sigma\mz$ at the end of each iteration. This is because the $k$-th iteration in stage 2 can be regarded as one-step Schr\"{o}dinger Bridge from $q_{\sqrt{1-k/N_2}\sigma}$ to $q_{\sqrt{1-{(k+1)}/N_2}\sigma}$. Thus, the particles are supposed to follow $q_{\sqrt{1-{(k+1)}/N_2}\sigma}(\mx)$ after the $k$-th iteration.
\begin{algorithm}[tb]
   \caption{Inpainting with stage 2}
   \label{alg:inpainting}
   \begin{algorithmic}
   \STATE {\bfseries Input:} $\my = \mx \odot \mm$, $\mm$
   \STATE Sample $\mz \sim \mathcal{N}(\mzero, \mI)$
   \STATE $\mx_0 = \my + \sigma\mz$
   \FOR{$k=0$ {\bfseries to} $N_2-1$}
   \STATE Sample $\meps_k \sim \mathcal{N}(\mzero, \mI)$
   \STATE $\mb(\mx_k)=\ms_\theta(\mx_k, \sqrt{1-\frac{k}{N_2}}\sigma)$
   \STATE $\mx_{k+1} = \mx_k + \frac{\sigma^2}{N_2}\mb(\mx_k)+\frac{\sigma}{\sqrt{N_2}}\meps_n$
   \STATE $\mx_{k+1} = \mx_{k+1} \odot (1-\mm) + (\my + \sqrt{1-\frac{k+1}{N_2}}\sigma\mz) \odot \mm$
   \ENDFOR
   \STATE {\bfseries return} $\mx_{N_2}$
   \end{algorithmic}
\end{algorithm}

\begin{figure}[ht]
\vskip 0.2in
\begin{center}
\centerline{\includegraphics[width=0.99\columnwidth]{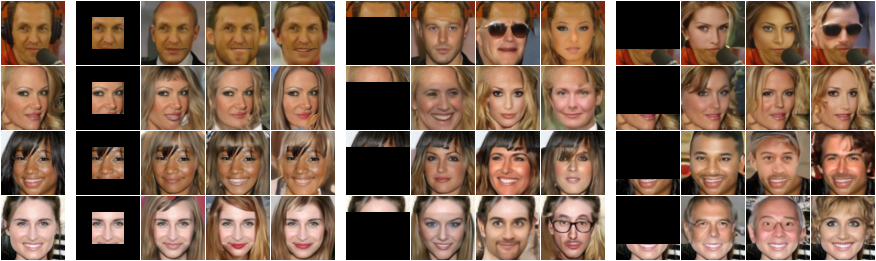}}
\caption{Image inpainting on CelebA. The leftmost column contains real images. Each occluded image is followed by three inpainting samples.}
\label{inpainting}
\end{center}
\vskip -0.2in
\end{figure}




%

\section{Conclusion}
We propose to  learn a generative model via   entropy interpolation with a Schr\"{o}dinger Bridge.
At the population level, this entropy interpolation   can be  characterized  via an SDE on $[0,1]$ with a time varying drift term.
 We derive a two-stage  Schr\"{o}dinger Bridge  algorithm by plugging the  drift term estimated  by a deep score estimator and a deep density estimator in the Euler-Maruyama method.
Under some smoothness assumptions of the target distribution,
we prove  the consistency
 of the proposed Schr\"{o}dinger Bridge approach, guaranteeing that the learned distribution converges to the target distribution.
Experimental results on multimodal synthetic data and benchmark data support our theoretical findings and demonstrate that the generative model via Schr\"{o}dinger Bridge  is comparable  with state-of-the-art  GANs, suggesting a new formulation of generative learning.

\section{Acknowledgement}
We thank the reviewers for their valuable comments. This work is supported in part by
the National Science Foundation of China under Grant 11871474 and by the research fund of
KLATASDSMOE,  the National Key Research and Development Program of China 208AAA0101100, Hong Kong Research Grant Council [16307818, 16301419, 16308120], Guangdong-Hong Kong-Macao Joint Laboratory project [2020B1212030001], Hong Kong Innovation and Technology Fund [PRP/029/19FX], Hong Kong University of Science and Technology (HKUST) [startup grant R9405, Z0428 from the Big Data Institute] and the HKUST-WeBank Joint Lab project. The computational task for this work was partially performed using the X-GPU cluster supported by the RGC Collaborative Research Fund: C6021-19EF.

\appendix

\section{Proofs}
\subsection{Proof of Theorem \ref{th01} }
\begin{thm}\label{th01}\cite{leonard2014survey}
If $\mu, \nu \ll \mathscr{L}$,  then SBP admits a unique solution $\mathbf{Q}^* = f^*(X_0)g^*(X_1)\mathbf{P}_{\tau}$, where
$f^*$, $g^*$ are $\mathscr{L}$-measurable nonnegative  functions satisfying the  Schr\"{o}dinger systems
$\left\{\begin{array}{l}
f^*(\mx) \mathbb{E}_{\mathbf{P}_{\tau}}\left[g^*\left(X_{1}\right) \mid X_{0}=\mx\right]= \frac{d \mu}{d\mathscr{L}}(\mx), \quad \mathscr{L}-a . e . \\
g^*(\my)  \mathbb{E}_{\mathbf{P}_{\tau}}\left[f^{*}\left(X_{0}\right) \mid X_{1}=\my\right]=\frac{d \nu}{d\mathscr{L}}(\my), \quad \mathscr{L}-a . e .
\end{array}\right.$
Furthermore, the pair $(\mathbf{Q}^*_{t},\mv^*_{t})$ with $$\mv^*_{t}(\mx) = \nabla_{\mx} \log \mathbb{E}_{\mathbf{P}_{\tau}}\left[g^{*}\left(X_{1}\right) \mid X_{t}= \mx\right]$$ solves the minimum action problem
$$\min_{\mu_t,\mv_t} \int_{0}^{1} \mathbb{E}_{\mz\sim \mu_t}[\|\mv_t(\mz)\|^2] d t$$ s.t.
$$\left\{\begin{array}{l}
\partial_{t}\mu_t = -\nabla \cdot(\mu_t \mv_t) +  \frac{\tau\Delta}{2} \mu_t, \quad \text { on }(0,1) \times \mathbb{R}^{d} \\
\mu_{0}=\mu, \mu_{1}=\nu.
\end{array}\right.
$$
\end{thm}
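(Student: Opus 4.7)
The plan is to split the theorem into a static entropic part that identifies $\mathbf{Q}^*$ and a dynamic stochastic-control part that identifies the drift $\mathbf{v}^*_t$, and then stitch them together via Girsanov and Doob's $h$-transform.

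First, I would reduce the SBP to a static entropic problem by disintegrating along the endpoints. For any $\mathbf{Q}\ll\mathbf{P}_{\tau}$, write $\mathbf{Q}(d\omega)=\pi(dx,dy)\,\mathbf{P}_{\tau}^{xy}(d\omega)$ where $\pi=(X_0,X_1)_\#\mathbf{Q}$ and $\mathbf{P}_{\tau}^{xy}$ is the Brownian bridge. The chain rule for relative entropy gives
$$\mathbb{D}_{\mathrm{KL}}(\mathbf{Q}\|\mathbf{P}_{\tau}) = \mathbb{D}_{\mathrm{KL}}(\pi\|R_{\tau}) + \int \mathbb{D}_{\mathrm{KL}}(\mathbf{Q}^{xy}\|\mathbf{P}_{\tau}^{xy})\,\pi(dx,dy),$$
where $R_{\tau}(dx,dy)=h_{\tau}(0,x,1,y)\,dx\,dy$ is the joint endpoint marginal of $\mathbf{P}_{\tau}$. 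The second term is nonnegative and vanishes iff the bridge is preserved. Since the marginal constraints only touch $\pi$, minimizing over $\mathbf{Q}$ reduces to minimizing $\mathbb{D}_{\mathrm{KL}}(\pi\|R_{\tau})$ over couplings of $\mu$ and $\nu$, with the bridge $\mathbf{P}_{\tau}^{xy}$ filled in between.

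Next I would solve this static problem by I-projection. Because the constraint set is convex and the objective strictly convex, Csiszár's projection theorem (equivalently, Sinkhorn/entropic OT duality under $\mu,\nu\ll\mathscr{L}$ with $\mathbb{D}_{\mathrm{KL}}(\mu\otimes\nu\|R_{\tau})<\infty$) gives a unique minimizer of the product form $\pi^*(dx,dy)=f^*(x)g^*(y)\,R_{\tau}(dx,dy)$. Imposing the two marginal constraints and using the explicit expression of $R_{\tau}$ produces exactly the Schrödinger system. Re-assembling $\pi^*$ with the Brownian bridges recovers $\mathbf{Q}^*=f^*(X_0)g^*(X_1)\mathbf{P}_{\tau}$.

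For the dynamic formulation I would apply Girsanov's theorem to an arbitrary competitor. Any $\mathbf{Q}$ with finite entropy relative to $\mathbf{P}_{\tau}$ and initial marginal $\mu$ is the law of an Itô diffusion $d\mx_t = \mv_t\,dt+\sqrt{\tau}\,d\mw_t$, and Girsanov yields
$$\mathbb{D}_{\mathrm{KL}}(\mathbf{Q}\|\mathbf{P}_{\tau}) = \mathbb{D}_{\mathrm{KL}}(\mu\|\mathscr{L}) + \frac{1}{2\tau}\,\mathbb{E}_{\mathbf{Q}}\!\left[\int_0^1 \|\mv_t\|^2\,dt\right].$$
Because the initial KL term and the marginal $\mu$ are fixed, the SBP is equivalent to minimizing the quadratic action in $\mv$ under the SDE and marginal constraints — that is, the stated minimum-action problem. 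The continuity equation $\partial_t\mu_t=-\nabla\cdot(\mu_t\mv_t)+\frac{\tau\Delta}{2}\mu_t$ is then the Fokker–Planck equation of the SDE. To obtain the explicit form of $\mv^*_t$, define $h_t(\mx):=\mathbb{E}_{\mathbf{P}_{\tau}}[g^*(X_1)\mid X_t=\mx]$, which is positive and space–time harmonic for the heat operator, hence a martingale under $\mathbf{P}_{\tau}$. Doob's $h$-transform applied with $h_t$ turns $\mathbf{P}_{\tau}$ into a law with drift $\tau\nabla\log h_t$; since multiplying $\mathbf{P}_{\tau}$ by $f^*(X_0)$ only reshapes the initial law, the law of $X$ under $\mathbf{Q}^*$ satisfies $d\mx_t=\tau\nabla\log h_t(\mx_t)\,dt+\sqrt{\tau}\,d\mw_t$, giving exactly $\mv^*_t(\mx)=\nabla_{\mx}\log\mathbb{E}_{\mathbf{P}_{\tau}}[g^*(X_1)\mid X_t=\mx]$.

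The main obstacle is handling $\mathbf{P}_{\tau}$ rigorously as a $\sigma$-finite (not probability) reference: the entropy chain-rule decomposition, the identification of $h_t$ as a true (not merely local) martingale, and the finiteness condition that guarantees existence of $(f^*,g^*)$ all require careful truncation and appeal to the smoothing properties of the heat semigroup, as done in Léonard's survey. Once these measure-theoretic issues are in place, uniqueness on the static side propagates to uniqueness of the optimal pair $(\mathbf{Q}^*_t,\mv^*_t)$ in the dynamic problem by the Girsanov equivalence.
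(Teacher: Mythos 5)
The paper itself offers no proof of this theorem: the appendix's proof block consists of the single line \emph{``Theorem \ref{th01} follows from \cite{leonard2014survey}.''} So there is no paper argument to compare against in the literal sense, and what you have done is reconstruct the standard argument from the cited survey.

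Your sketch is correct in outline and it is the right route. The entropy chain rule along $(X_0,X_1)$-disintegration cleanly reduces the path-space SBP to the static entropic coupling problem, and the second term in your decomposition vanishing forces the optimizer to be a mixture of the reference bridges — this is exactly why the dynamic and static formulations have the same value. The I-projection/entropic-OT step then gives the product factorization $\pi^*=f^*\!\otimes g^*\cdot R_\tau$, and pushing the marginal constraints through $R_\tau$ yields the Schrödinger system. On the dynamic side, Girsanov plus the chain rule gives
\begin{equation*}
\mathbb{D}_{\mathrm{KL}}(\mathbf{Q}\|\mathbf{P}_{\tau}) = \mathbb{D}_{\mathrm{KL}}(\mu\|\mathscr{L}) + \frac{1}{2\tau}\,\mathbb{E}_{\mathbf{Q}}\!\left[\int_0^1 \|\mv_t\|^2\,dt\right],
\end{equation*}
and since the first term is fixed by the constraint $\mathbf{Q}_0=\mu$, minimizing path-KL is equivalent to minimizing the quadratic action over admissible drifts; the Fokker--Planck/continuity equation is the constraint. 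The Doob $h$-transform with $h_t(\mx)=\mathbb{E}_{\mathbf{P}_\tau}[g^*(X_1)\mid X_t=\mx]$ then identifies $\mv^*_t=\nabla\log h_t$. All of these ingredients are the ones actually used in Léonard's proof.

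Two points worth flagging. First, you are right to single out the $\sigma$-finiteness of $\mathbf{P}_\tau$ as the main technical nuisance — relative entropy with respect to an unbounded reference, the chain rule in that setting, and the true-martingale (versus local martingale) property of $h_t$ all need the truncation/heat-smoothing arguments you mention. Second, the theorem as printed is actually slightly understated: $\mu,\nu\ll\mathscr{L}$ alone does not guarantee \emph{existence} of a solution (the value of the SBP can be $+\infty$). One needs a finiteness/integrability condition such as $\mathbb{D}_{\mathrm{KL}}(\mu\otimes\nu\|R_\tau)<\infty$, which you correctly inserted as an auxiliary hypothesis in the Csisz\'ar step; this should be treated as part of the theorem's hypotheses, not merely a technical convenience.
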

\begin{proof}
Theorem \ref{th01} follows from \cite{leonard2014survey}.
\end{proof}
\subsection{Proof of Theorem \ref{th02}}
\begin{thm}\label{th02}\cite{dai1991stochastic}
Let \begin{equation}\label{drift}
 \begin{aligned}
 &\mathbf{u}_{t}^* = \tau \mv^*_{t} = \tau \nabla_{\mx}\log g_t(\mx) \\
 = &\tau \nabla_{\mx}\log  \int h_{\tau}(t, \mx, 1, \my)g_1(\my) d \my.
 \end{aligned}
 \end{equation}
 Then,
$$\mathbf{u}^*_{t}(\mx)\in \arg\min_{\mathbf{u} \in \mathcal{U}}\mathbb{E}\left[\int_0^1\frac{1}{2\tau}\|\mathbf{u}_t\|^2\mathrm{d}t\right]$$
s.t.
\begin{equation}\label{sde}
\left\{\begin{array}{l}
\mathrm{d}\mx_t = \mathbf{u}_t \mathrm{d}t + \sqrt{\tau}\mathrm{d}\mw_t, \\
\mx_0\sim q(\mx),\quad \mx_1\sim p(\mx).
\end{array}\right.
\end{equation}
\end{thm}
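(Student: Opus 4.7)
The plan is to reduce the stochastic control problem to the entropy-minimization (SBP) already solved in Theorem \ref{th01} via Girsanov's theorem, and then identify the Markovian drift of the optimal path measure $\mathbf{Q}^{*}=f^{*}(X_{0})g^{*}(X_{1})\mathbf{P}_{\tau}$ through a Doob $h$-transform. The admissible class $\mathcal{U}$ consists of Markov controls $\mathbf{u}_{t}(\mx)$ such that the SDE in \eqref{sde} admits a weak solution with the prescribed endpoint marginals and $\mathbb{E}\int_{0}^{1}\|\mathbf{u}_{t}\|^{2}dt<\infty$, so Novikov's condition is available throughout.

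First I would fix any $\mathbf{u}\in\mathcal{U}$ and let $\mathbf{Q}^{\mathbf{u}}$ denote the law on $\Omega$ of the corresponding diffusion with initial marginal $q$. Let $\mathbf{W}_{\tau}^{q}=\int\mathbf{W}_{\tau}^{\mx}q(\mx)d\mx$ be the Wiener measure with the same initial marginal. Girsanov's theorem gives
$$\frac{d\mathbf{Q}^{\mathbf{u}}}{d\mathbf{W}_{\tau}^{q}}=\exp\!\left(\int_{0}^{1}\tfrac{1}{\sqrt{\tau}}\mathbf{u}_{t}\cdot d\mw_{t}-\int_{0}^{1}\tfrac{1}{2\tau}\|\mathbf{u}_{t}\|^{2}dt\right),$$
and taking $\mathbf{Q}^{\mathbf{u}}$-expectation of the logarithm yields $\mathbb{D}_{\mathrm{KL}}(\mathbf{Q}^{\mathbf{u}}\|\mathbf{W}_{\tau}^{q})=\mathbb{E}_{\mathbf{Q}^{\mathbf{u}}}\!\int_{0}^{1}\tfrac{1}{2\tau}\|\mathbf{u}_{t}\|^{2}dt$, which is (up to the factor $1/\tau$) the cost functional. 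Writing $\mathbf{P}_{\tau}=\int\mathbf{W}_{\tau}^{\mx}d\mx$ and using $\log\frac{d\mathbf{Q}^{\mathbf{u}}}{d\mathbf{P}_{\tau}}=\log q(X_{0})+\log\frac{d\mathbf{Q}^{\mathbf{u}}}{d\mathbf{W}_{\tau}^{q}}$, we obtain
$$\mathbb{D}_{\mathrm{KL}}(\mathbf{Q}^{\mathbf{u}}\|\mathbf{P}_{\tau})=\mathbb{E}_{\mu}[\log q]+\mathbb{E}_{\mathbf{Q}^{\mathbf{u}}}\!\!\int_{0}^{1}\tfrac{1}{2\tau}\|\mathbf{u}_{t}\|^{2}dt.$$
Since the first term depends only on the fixed initial marginal $\mu$, minimizing the energy functional over $\mathcal{U}$ is equivalent to minimizing $\mathbb{D}_{\mathrm{KL}}(\cdot\|\mathbf{P}_{\tau})$ over path measures with marginals $\mu$ and $\nu$. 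Theorem \ref{th01} therefore identifies the optimizer as $\mathbf{Q}^{*}=f^{*}(X_{0})g^{*}(X_{1})\mathbf{P}_{\tau}$.

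Next I would identify the Markovian drift of $\mathbf{Q}^{*}$. Because $g_{t}(\mx)=\mathbb{E}_{\mathbf{P}_{\tau}}[g^{*}(X_{1})\mid X_{t}=\mx]$ solves the backward heat equation $\partial_{t}g_{t}=-\tfrac{\tau}{2}\Delta g_{t}$, the process $g_{t}(X_{t})$ is a $\mathbf{P}_{\tau}$-martingale. Conditioning $\mathbf{P}_{\tau}$ on $\{X_{1}\sim g^{*}\,d\mathscr{L}\}$ is exactly the Doob $h$-transform with $h=g_{t}$, and the standard $h$-transform calculation (writing the forward Kolmogorov operator of $\mathbf{Q}^{*}$ via the identity $\frac{1}{g_{t}}\partial_{t}(g_{t}q_{t})=\frac{1}{q_{t}}\partial_{t}q_{t}+\frac{1}{g_{t}}\partial_{t}g_{t}$ and matching it with a Fokker--Planck equation for drift $\mathbf{b}_{t}$) shows the drift of $X_{t}$ under $\mathbf{Q}^{*}$ equals $\tau\nabla_{\mx}\log g_{t}(\mx)$. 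Plugging $\mathbf{u}_{t}^{*}=\tau\nabla\log g_{t}$ back into the cost and using the $h$-transform representation verifies it is admissible in $\mathcal{U}$ and realizes the minimum.

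The main obstacle is the rigorous identification of the Markovian drift of $\mathbf{Q}^{*}$, because $\mathbf{Q}^{*}$ is defined as a product of terminal functionals against $\mathbf{P}_{\tau}$ and not a priori as the law of an Itô diffusion; making the $h$-transform argument clean requires showing $g_{t}>0$ on the support of the marginals, verifying integrability of $\tau\nabla\log g_{t}$ along paths so that Girsanov applies in reverse, and invoking uniqueness of the solution to the Fokker--Planck equation with prescribed marginals to conclude that the energy-optimal control coincides with this drift. The remaining steps (equivalence of KL and energy minimization, Novikov's condition for admissible controls, and uniqueness from strict convexity of $\|\mathbf{u}\|^{2}$) are comparatively routine.
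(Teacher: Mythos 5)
Your proposal is sound in outline, but note how it relates to the paper: the paper does not prove Theorem \ref{th02} at all --- its ``proof'' is a one-line deferral to the cited reference \cite{dai1991stochastic}. What you have written is essentially a reconstruction of that reference's argument (and of the treatment in \cite{leonard2014survey,chen2020stochastic}): Girsanov turns the quadratic cost $\mathbb{E}\int_0^1\frac{1}{2\tau}\|\mathbf{u}_t\|^2\mathrm{d}t$ into $\mathbb{D}_{\mathrm{KL}}(\mathbf{Q}^{\mathbf{u}}\|\mathbf{P}_{\tau})$ up to the constant $\mathbb{E}_{\mu}[\log q]$ fixed by the initial marginal, the static/entropic problem is solved by Theorem \ref{th01}, and the Doob $h$-transform with the space--time harmonic function $g_t$ identifies the Markov drift of $\mathbf{Q}^{*}=f^{*}(X_0)g^{*}(X_1)\mathbf{P}_{\tau}$ as $\tau\nabla_{\mathbf{x}}\log g_t$, which is admissible and hence optimal since every admissible control induces a feasible path measure. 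This gives the reader an actual argument where the paper gives none, so it buys more than the paper's proof does. Two small corrections: finite energy $\mathbb{E}\int_0^1\|\mathbf{u}_t\|^2\mathrm{d}t<\infty$ does \emph{not} imply Novikov's condition, so you should not lean on it; the identity $\mathbb{D}_{\mathrm{KL}}(\mathbf{Q}^{\mathbf{u}}\|\mathbf{W}^{q}_{\tau})=\mathbb{E}_{\mathbf{Q}^{\mathbf{u}}}\int_0^1\frac{1}{2\tau}\|\mathbf{u}_t\|^2\mathrm{d}t$ for finite-energy controls is instead obtained by a localization/stopping argument (as in Dai Pra and L\'eonard). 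Also, matching Fokker--Planck equations only pins down the time marginals, not the path law, so to conclude that the energy-minimizing control is $\tau\nabla\log g_t$ you should finish with the $h$-transform (or martingale-problem uniqueness) identification of $\mathbf{Q}^{*}$ itself as the law of the controlled SDE, a point you correctly flag as the delicate step.
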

\begin{proof}
Theorem \ref{th02} follows from \cite{dai1991stochastic}.
\end{proof}
\subsection{Proof of Theorem \ref{thm1}}
\begin{thm}
\label{thm1}
Define the density ratio $f(\mx)=\frac{q_{\sigma}(\mx)}{\Phi_{\sqrt{\tau}}(\mx)}$.
Then for the SDE
\begin{equation}
\label{stg1}
\mathrm{d}\mx_t = \tau\nabla\log\mathbb{E}_{\mz\sim\Phi_{\sqrt{\tau}}}[f(\mx_t+\sqrt{1-t}\mz)] \mathrm{d}t+\sqrt{\tau}\mathrm{d}\mw_t
\end{equation}
with initial condition $\mx_0 = \mzero$, we have $\mx_1 \sim q_{\sigma}(\mx)$.

And, for the SDE
\begin{equation}
\label{stg2}
\mathrm{d}\mx_t = \sigma^2\nabla\log q_{\sqrt{1-t}\sigma}(\mx_t)\mathrm{d}t+\sigma\mathrm{d}\mw_t
\end{equation}
with initial condition $\mx_0 \sim q_{\sigma}(\mx)$, we have $\mx_1 \sim p_{\mathrm{data}}(\mx)$.
\end{thm}
 \begin{proof}
 Denote
$$f_{0}(\mx) = f^*(\mx), \ \ g_{1}(\my) = g^*(\my),$$
$${f_{1}}(\my) = \mathbb{E}_{\mathbf{P}_{\tau}}\left[f^{*}\left(X_{0}\right) \mid X_{1}=\my\right] = \int h_{\tau}(0, \mx, 1, \my)f_{0}(\mx) d \mx,$$
$${g_{0}}(\mx)= \mathbb{E}_{\mathbf{P}_{\tau}}\left[g^*\left(X_{1}\right) \mid X_{0}=\mx\right] = \int h_{\tau}(0, \mx, 1, \my)g_{1}(\my) d \my.$$
 Then, the  Schr\"{o}dinger system  in Theorem \ref{th01} can also be characterized
  by
  \begin{equation}\label{sbs}
  q(\mx) = f_0(\mx) {g_{0}}(\mx), \ \  p(\my)=  {f_{1}}(\my)g_1(\my)
  \end{equation}
 For Eq. (\ref{stg1}), let $f_0(\mx) = \delta_{\mzero}(\mx)$ be the Dirac delta function,
 $f_1(\my) = \int h_{\tau}(0, \mx, 1, \my)f_0(\mx)\mathrm{d}\mx = \Phi_{\sqrt{\tau}}(\my),$
  $g_1(\mx) = \frac{q_{\sigma}(\mx)}{\Phi_{\sqrt{\tau}}(\mx)} = f(\mx)$,
 $g_0(\mzero) = \int h_{\tau}(0, \mzero, 1, \my)g_1(\my)\mathrm{d}\my = 1$.
 Then $f_i,g_i$, $i = 0,1$ solve  Schr\"{o}dinger system (\ref{sbs})   with $q = \delta_{\mzero}$, $p = q_{\sigma}$.
 Define
 \begin{align*}
 g_t(\mx) = &\int h_{\tau}(t, \mx, 1, \my)g_1(\my)\mathrm{d}\my = \mathbb{E}_{\my \sim \Phi_{\sqrt{(1-t)\tau}} }[f(\my)]\\
 &=  \sqrt{1-t}\mathbb{E}_{\mz\sim \Phi_{\sqrt{\tau}}} [f(\mx + \sqrt{1-t} \mz)].
 \end{align*}
 By Theorem \ref{th02}, $\mathbf{u}^{*}(\mx) = \tau \nabla_{\mx}\log  g_t(\mx) $ solves the optimal control problem
 $ \min_{\mathbf{u} \in \mathcal{U}}\mathbb{E}\left[\int_0^1\frac{1}{2\tau}\|\mathbf{u}_t\|^2\mathrm{d}t\right]$ such that
 \begin{equation*}
 \left\{\begin{array}{l}
 \mathrm{d}\mx_t = \mathbf{u}_t \mathrm{d}t + \sqrt{\tau}\mathrm{d}\mw_t \\
 \mx_0\sim \delta_{\mathbf{0}},\quad \mx_1\sim q_{\sigma}(\mx)
 \end{array}\right.
 \end{equation*}
 i.e., the dynamic of  Eq. (\ref{stg1}) will push $\delta_{\mathbf{0}}$ onto $q_{\sigma}$ from  $t=0$ to  $t=1$.

 For Eq. (\ref{stg2}),  
 let
  $f_0(\mx) = 1,$
 $$f_1(\my) = \int h_{\sigma^2}(0, \mx, 1, \my)f_0(\mx)\mathrm{d}\mx = 1,$$
 $g_1(\mx) = p_{\mathrm{data}}(\mx)$, $g_0(\mx) = \int h_{\sigma^2}(0, \mx, 1, \my)g_1(\my)\mathrm{d}\my = q_{\sigma}(\mx)$.
 Then, $f_i,g_i$, $i = 0,1$ solve Schr\"{o}dinger system (\ref{sbs})   with $q = q_{\sigma}$, $p = p_{\mathrm{data}}$ and $\tau = \sigma^2$.
 Define
 \begin{align*}
 g_t(\mx) = &\int h_{\sigma^2}(t, \mx, 1, \my)g_1(\my)\mathrm{d}\my = q_{\sqrt{1-t}\sigma}(\mx).
 \end{align*}
 By Theorem \ref{th02}, $\mathbf{u}^{*}(\mx) = \sigma^2 \nabla_{\mx}\log g_t(\mx) $ solves the optimal control problem
 $ \min_{\mathbf{u} \in \mathcal{U}}\mathbb{E}\left[\int_0^1\frac{1}{2\sigma^2}\|\mathbf{u}_t\|^2\mathrm{d}t\right]$ such that
 \begin{equation*}
 \left\{\begin{array}{l}
 \mathrm{d}\mx_t = \mathbf{u}_t \mathrm{d}t + \sigma\mathrm{d}\mw_t \\
 \mx_0\sim q_{\sigma}(\mx),\quad \mx_1\sim p_{\mathrm{data}}(\mx)
 \end{array}\right.
 \end{equation*}
 i.e., the dynamic of  Eq. (\ref{stg2}) will push $q_{\sigma}$ onto $p_{\mathrm{data}}$ from  $t=0$ to  $t=1$.
 \end{proof}
\subsection{Proof of Theorem \ref{thdes1}}
\begin{thm}\label{thdes1}
Assume that the  support of  $p_{\mathrm{data}}(\mx)$  is   contained in a compact set, and $f(\mx)$ is Lipschitz continuous and bounded.  Set the depth $\mathcal{D}$, width $\mathcal{W}$, and size $\mathcal{S}$ of  $\mathcal{NN}_{\phi} $
as $$\mathcal{D}=\mathcal{O} (\log (n)), \mathcal{W}= \mathcal{O}( n^{\frac{d}{2(2+d)}} / \log (n)),$$  $$\mathcal{S}=\mathcal{O}(n^{\frac{d-2 }{d+2 }} \log (n)^{-3}).$$ Then $\mathbb{E}[\|\hat{f}(\mx)-f(\mx)\|_{L^2(p_{\mathrm{data}})}]\rightarrow0$  as $n\rightarrow \infty.$
\end{thm}
\begin{proof}
Recall that \begin{equation}\label{edrift1}
\hat{f}(\mx) = \exp(\hat{r}_{\phi}(\mx)),
\end{equation}
where $\hat{r}_{\phi} \in \mathcal{NN}_{\phi}$ is the neural network that minimizes the empirical loss:
\begin{equation}\label{drest}
\begin{aligned}
& \hat{r}_{\phi}  \in {\arg\min}_{r_{\phi}\in \mathcal{NN}_{\phi}}\hat{L}(r_{\phi}), \,\text{where }\, \hat{L}(r_{\phi}) = \\
& \frac{1}{n}\sum_{i =1}^n[\log(1+\exp(-r_{\phi}(\widetilde{\mx}_i)))
 + \log(1+\exp(r_{\phi}(\mz_i)))],
\end{aligned}
\end{equation}

$\widetilde{\mx}_1,...,\widetilde{\mx}_n$ are i.i.d. samples from $q_{\sigma}(\mx)$,  $\mz_1,...,\mz_n$ are i.i.d. samples from $\Phi_{\sqrt{\tau}}(\mx)$.
Note that $f(\mx) = \exp{(r^*(\mx))}$ with
$$\begin{aligned}
r^{*}\in{\arg\min}_{r}\mathcal{L}(r),
\end{aligned}$$
where $\mathcal{L}(r)= \mathbb{E}_{q_{\sigma}(\mx)}\log (1+\exp(-r(\mx)))
 + \mathbb{E}_{\Phi_{\sqrt{\tau}}(\mx)}\log (1+\exp(r(\mx)))$.

Theorem \ref{thdes1} follows by showing $\|\hat{r}_{\phi}-r^*\|_{L^2(p_{\mathrm{data}})} \rightarrow 0$ as $n\rightarrow \infty$. By the assumption that $r^*(\mx)$ is Lipschitz continuous on a compact set and bounded, we use  $L_1$ and $B_1$ to denote its Lipschitz constant and the upper bound. Without loss of generality, we use $E= [-C, C]^d$ to denote its domain. By  Lemma \ref{appshen} (given in A.6) with $L = \log n$, $N = n^{\frac{d}{2(2+d)}}/\log n$, there exists a $\bar{r}_{\phi} \in \mathcal{NN}_{\phi}$ with
depth $\mathcal{D} = 12 \log n + 14+2d,$ width $\mathcal{W} = 3^{d+3}\max\{d(n^{\frac{d}{2(2+d)}}/\log n)^{\frac{1}{d}},n^{\frac{d}{2(2+d)}}/\log n+1\},$ and size $\mathcal{S} =n^{\frac{d-2}{d+2}}/(\log^4 n),$ $\mathcal{B} = 2B_1$,
such that
\begin{equation}\label{app1}
\|\bar{r}_{\phi} - r^*\|_{L^2(p_{\mathrm{data}})} \leq  38 L_1C\sqrt{d}  n^{-\frac{1}{d+2}}.
\end{equation}
Using Taylor expansion and the boundness of $r_{\phi} \in \mathcal{NN}_{\phi}$ and $r^*$, it is easy to show that $\mathcal{L}({r}_{\phi})-\mathcal{L}(r^*)$ is sandwiched by
 $\|\bar{r}_{\phi} - r^*\|_{L^2(p_{\mathrm{data}})}^2 $,    i.e., $\forall r_{\phi} \in \mathcal{NN}_{\phi}$
\begin{equation}\label{lub}
\begin{aligned}
C_{1,\mathcal{B}} \|r_{\phi} - r^*\|_{L^2(p_{\mathrm{data}})}^2 \leq \mathcal{L}(r_{\phi})-\mathcal{L}(r^*) \\ \leq C_{2,\mathcal{B}} \|r_{\phi} - r^*\|_{L^2(p_{\mathrm{data}})}^2.
\end{aligned}
\end{equation}
Then,
\begin{align}
&C_{1,\mathcal{B}}\|\hat{r}_{\phi} - r^{*}\|_{L^2}^2 \leq \mathcal{L}(\hat{r}_{\phi})-\mathcal{L}(r^{*}) \nonumber\\
= &\mathcal{L}(\hat{r}_{\phi}) - \hat{\mathcal{L}}(\hat{r}_{\phi}) + \hat{\mathcal{L}}(\hat{r}_{\phi})-  \hat{\mathcal{L}}(\bar{r}_{\phi})\nonumber\\
& + \hat{\mathcal{L}}(\bar{r}_{\phi}) - \mathcal{L}(\bar{r}_{\phi}) +\mathcal{L}(\bar{r}_{\phi})  - \mathcal{L}(r^*)\nonumber\\
\leq & 2 \sup_{r\in \mathcal{NN}_{\phi}} |\mathcal{L}(r) - \hat{\mathcal{L}}(r) |+ C_{2,\mathcal{B}}\|\bar{r}_{\phi} - r^{*}\|_{L^2(\nu)}^2\nonumber\\
\leq & 2 \sup_{r\in \mathcal{NN}_{\phi}} |\mathcal{L}(r) - \hat{\mathcal{L}}(r) |+ 38C_{2,\mathcal{B}} L_1C\sqrt{d}  n^{-\frac{1}{d+2}}, \label{A7}
\end{align}
where we use the definition  of $\hat{r}_{\phi}$, $r^*$, and $\bar{r}_{\phi}$ as well as  (\ref{app1}) and (\ref{lub}).
  Next, we finish the proof by bounding the empirical process term in (\ref{A7}).
  Let $\mathbf{O} = (\tilde{\mx},\mz)$ be the  random variable pair, with $\mx \sim p_{\mathrm{data}}$, $\mz \sim \Phi_{\sqrt{\tau}}$, and  $\{\mathbf{O}_{i}\}_{i=1}^{n}$ be $n$ i.i.d. copies of  $\mathbf{O}$.  Denote
  $\mathbf{o} = (\tilde{x},z) \in \mathbb{R}^d \times \mathbb{R}^d$ be a realization of $\mathbf{O}$, and
  define  $$b(r,\mathbf{o}) = \log (1+\exp^{-r(\tilde{x})})+\log (1+\exp^{r(z)}).$$
  It is easy to check that $b(r,\mathbf{o})$ is 1-Lipschitz on $r$, i.e.,
  \begin{equation}\label{lip}
|b(r,\mathbf{o})-b(\tilde{r},\mathbf{o})|\leq |r(\tilde{x}) - \tilde{r}(\tilde{x})|+|r({z}) - \tilde{r}({z})|
  \end{equation}
  Let $\widetilde{\mathbf{O}}_i$ be a ghost i.i.d. copy of $\mathbf{O}_{i},$ and $\sigma_i (\epsilon_i) $ be the i.i.d. Rademacher random (standard  normal) variables that are independent with
  $\widetilde{\mathbf{O}}_i$ and $\mathbf{O}_{i}$,  $i = 1,...n.$
  We need the following
  results \eqref{A8}-\eqref{A9} to upper bound the expected value of the right hand side term in \eqref{A7}.
 \begin{equation}\label{A8}
 \mathbb{E}_{\{\mathbf{O}_i\}_{i=1}^n} [\sup_{r} |\mathcal{L}(r) - \hat{\mathcal{L}}(r) | ] \leq \mathcal{O}(\mathcal{G}(\mathcal{NN})),
 \end{equation}
  where $\mathcal{G}(\mathcal{NN})$ is the Gaussian complexity \cite{bartlett2002rademacher} of $\mathcal{NN}_{\phi}$ defined as
  $$\mathcal{G}(\mathcal{NN}) =  \mathbb{E}_{\{\mathbf{O}_i, \epsilon_i \}_{i}^n}[\sup_{r\in \mathcal{NN}_{\phi}}|\frac{1}{n}\sum_{i=1}^n\epsilon_i b(r,\mathbf{O}_i)|].$$
  \textbf{Proof of} \eqref{A8}.\\
  Obviously,
   $$\mathcal{L}(r) = \mathbb{E}_{\mathbf{O}} [b(r,\mathbf{O})] = \frac{1}{n}\mathbb{E}_{\widetilde{\mathbf{O}}_i} [b(r,\widetilde{\mathbf{O}}_i],$$ and
   $$ \widehat{\mathcal{L}}(r) = \frac{1}{n}\sum_{i=1}^n b(r,\mathbf{O}_i).$$
      Let $$\mathcal{R}(\mathcal{NN}) = \frac{1}{n} \mathbb{E}_{\{\mathbf{O}_i, \sigma_i\}_{i}^n}[\sup_{r\in \mathcal{NN}_{\phi}}|\sum_{i=1}^n\sigma_i b(r,\mathbf{O}_i)|]$$ be the 	
Rademacher complexity of $\mathcal{NN}_{\phi}$ \cite{bartlett2002rademacher}.
    Then,
  \begin{align*}
  &\mathbb{E}_{\{\mathbf{O}_i\}_{i=1}^n} [\sup_{r} |\mathcal{L}(r) - \hat{\mathcal{L}}(r) | ] \\
  =&\frac{1}{n} \mathbb{E}_{\{\mathbf{O}_i\}_{i}^n} [\sup_{r} |\sum_{i=1}^n (\mathbb{E}_{\widetilde{\mathbf{O}}_i} [b(r,\widetilde{\mathbf{O}}_i)] - b(r,\mathbf{O}_i))|]\\
  \leq & \frac{1}{n} \mathbb{E}_{\{\mathbf{O}_i, \widetilde{\mathbf{O}}_i\}_{i}^n} [\sup_{r} |b(r,\widetilde{\mathbf{O}}_i) - b(r,{\mathbf{O}}_i)|]\\
  = & \frac{1}{n} \mathbb{E}_{\{\mathbf{O}_i, \widetilde{\mathbf{O}}_i,\sigma_i \}_{i}^n} [\sup_{r } |\sum_{i=1}^n\sigma_i(b(r,\widetilde{\mathbf{O}}_i) - b(r,{\mathbf{O}}_i))|]\\
  \leq & \frac{1}{n}  \mathbb{E}_{\{\mathbf{O}_i, \sigma_i \}_{i}^n} [\sup_{r } |\sum_{i=1}^n\sigma_i b(r,{\mathbf{O}}_i)| ]\\
  & + \frac{1}{n}  \mathbb{E}_{\{\widetilde{\mathbf{O}}_i,\sigma_i \}_{i}^n} [\sup_{r} |\sum_{i=1}^n\sigma_i b(r,\widetilde{\mathbf{O}}_i)| ] \\
  = & 2\mathcal{R}(b\circ\mathcal{NN})\\
  \leq & 4\mathcal{R}(\mathcal{NN})\\
  \leq & \mathcal{O}(\mathcal{G}(\mathcal{NN})),
  \end{align*}
  where the first inequality follows from Jensen's inequality, and the second equality holds since  both $\sigma_i(b(r,\widetilde{\mathbf{O}}_i) - b(r,{\mathbf{O}}_i))$ and $b(r,\widetilde{\mathbf{O}}_i) - b(D,{\mathbf{O}}_i)$ are governed by the same law, and the last  equality holds since the distribution of the two terms are the same. In the  third inequality, we use the   Lipschitz contraction property of Rademacher complexity, see Theorem 12 in \cite{bartlett2002rademacher},  and \eqref{lip}. The last inequality holds since the  relationship between the Gaussian complexity and  the Rademacher complexity, see for Lemma 4 in \cite{bartlett2002rademacher}.

  Next, we bound the Gaussian complexity.
  \begin{equation}\label{A9}
  \begin{aligned}
  &\mathcal{G}(\mathcal{NN}) \leq\\
  &\mathcal{O}(\mathcal{B} \sqrt{\frac{n}{\mathcal{D}\mathcal{S}\log \mathcal{S}}}\log \frac{n}{\mathcal{D}\mathcal{S}\log \mathcal{S}} \exp(-\log^2 \frac{n}{\mathcal{D}\mathcal{S}\log \mathcal{S}})).
  \end{aligned}
  \end{equation}
  \textbf{Proof of \eqref{A9}}.\\
  Since $\mathcal{NN}_{\phi}$ is  closed under negation,
\begin{align*}
&\mathcal{G}(\mathcal{NN}) =  \mathbb{E}_{\{\mathbf{O}_i, \epsilon_i \}_{i}^n}[\sup_{r\in \mathcal{NN}_{\phi}}\frac{1}{n}\sum_{i=1}^n\epsilon_i b(r,{\mathbf{O}}_i)]\\
& = \mathbb{E}_{\mathbf{O}_i}[ \mathbb{E}_{\epsilon_i}[\sup_{r\in \mathcal{NN}_{\phi}}\frac{1}{n}\sum_{i=1}^n\epsilon_i b(r,{\mathbf{O}}_i)]|\{\mathbf{O}_i\}_{i=1}^n].
\end{align*}
Conditioning on $\{\mathbf{O}_i\}_{i =1}^n$,
$\forall r, \tilde{r} \in \mathcal{NN}_{\phi}$, it easy to check $$\mathbb{V}_{\epsilon_i} [\frac{1}{n}\sum_{i=1}^n\epsilon_i (b(r,{\mathbf{O}}_i) - b(\tilde{r},{\mathbf{O}}_i))] = \frac{d_{\mathcal{NN}}(r,\tilde{r})}{\sqrt{n}},$$
where $$d_{\mathcal{NN}}(r,\tilde{r}) = \frac{1}{\sqrt{n}} \sqrt{\sum_{i =1}^n (b(r,{\mathbf{O}}_i) - b(\tilde{r},{\mathbf{O}}_i))^2}.$$
Denote  $\mathfrak{C}(\mathcal{NN},d_{\mathcal{NN}},\delta)$ as the covering number of $\mathcal{NN}_{\phi}$
under the metric $d_{\mathcal{NN}}$ with radius $\delta$, and let $\mathrm{Pdim}_{\mathcal{NN}}$ be the Pseudo-dimension of $\mathcal{NN}_{\phi}$.
Since the diameter of $\mathcal{NN}_{\phi}$ under $d_{\mathcal{NN}} $ is at most $\mathcal{B}$,  we have
  \begin{align*}
  &\mathcal{G}(\mathcal{NN})\\
  \leq & \frac{c}{\sqrt{n}} \mathbb{E}_{\{\mathbf{O}_i\}_{i=1}^n}[\int_{0}^{B} \sqrt{\log \mathfrak{C}(\mathcal{NN},d_{\mathcal{NN}},\delta)} \mathrm{d} \delta]\\
  \leq&\frac{c}{\sqrt{n}} \mathbb{E}_{\{\mathbf{O}_i\}_{i=1}^n}[\int_{0}^{\mathcal{B}} \sqrt{\log  \mathfrak{C}(\mathcal{NN},d_{\mathcal{NN},\infty},\delta)} \mathrm{d} \delta]\\
  \leq&\frac{c}{\sqrt{n}}\int_{0}^{\mathcal{B}} \sqrt{\mathrm{Pdim}_{\mathcal{NN}} \log \frac{2e\mathcal{B}n}{\delta \mathrm{Pdim}_{\mathcal{NN}} }} \mathrm{d} \delta, \\
  \leq& c \mathcal{B}(\frac{n}{\mathrm{Pdim}_{\mathcal{NN}}})^{1/2}\log (\frac{n}{\mathrm{Pdim}_{\mathcal{NN}}}) \exp( -\log^2(\frac{n}{\mathrm{Pdim}_{\mathcal{NN}}}))\\
  \leq& c\mathcal{B} \sqrt{\frac{n}{\mathcal{D}\mathcal{S}\log \mathcal{S}}}\log \frac{n}{\mathcal{D}\mathcal{S}\log \mathcal{S}} \exp(-\log^2 \frac{n}{\mathcal{D}\mathcal{S}\log \mathcal{S}}),
  \end{align*}
  where $c$ is a constant which may vary on different places, the first inequality follows from the chaining  Theorem 8.1.3 in \cite{vershynin2018high}, the second inequality holds due to
  $\mathfrak{C}(\mathcal{NN},d_{\mathcal{NN}},\delta)\leq \mathfrak{C}(\mathcal{NN},d_{\mathcal{NN},\infty},\delta)$, in the third inequality we use
  the relationship between the metric entropy and the Pseudo-dimension of the ReLU networks  $\mathcal{NN}_{\phi}$ \cite{anthony2009neural}, i.e.,
  $$\log \mathfrak{C}(\mathcal{NN},d_{\mathcal{NN},\infty},\delta)) \leq \mathrm{Pdim}_{\mathcal{NN}} \log \frac{2e\mathcal{B}n}{\delta\mathrm{Pdim}_{\mathcal{NN}}},$$
  the fourth inequality follows by  some calculation,
  and the last inequality  holds due to the  upper bound of Pseudo-dimension for the ReLU network $\mathcal{NN}_{\psi}$ satisfying  $$\mathrm{Pdim}_{\mathcal{NN}} =\mathcal{O} (\mathcal{D}\mathcal{S}\log \mathcal{S}),$$ see \cite{bartlett2019}.

  Finally, by \eqref{A7}-\eqref{A9} and the choice of $\mathcal{D}$, $\mathcal{W}$ and $\mathcal{S}$,
  we get $\mathbb{E}[\|\hat{r}_{\phi} -r^*\|_{L^2}^2] \leq \mathcal{O}(n^{-\frac{2}{2+d}})\rightarrow 0$ as $n\rightarrow \infty.$
\end{proof}
\subsection{Proof of Theorem \ref{thdes2}}
\begin{thm}\label{thdes2}
Assume that  $p_{\mathrm{data}}(\mx)$ is differentiable with  bounded support, and $ \nabla\log q_{\tilde{\sigma}}(\mx)$ is Lipschitz continuous and bounded for $(\tilde{\sigma},\mx)\in [0,\sigma] \times \mathbb{R}^d$.  Set the depth $\mathcal{D}$, width $\mathcal{W}$, and size $\mathcal{S}$ of  $\mathcal{NN}_{\theta} $
as $$\mathcal{D}=\mathcal{O} (\log (n)), \mathcal{W}= \mathcal{O}( \max\{n^{\frac{d}{2(2+d)}} / \log (n), d\}),$$  $$\mathcal{S}=\mathcal{O}(d n^{\frac{d-2 }{d+2 }} \log (n)^{-3}).$$ Then
$\mathbb{E}[\| \|\widehat{\nabla\log q_{\tilde{\sigma}}}(\mx)-\nabla\log q_{\tilde{\sigma}}(\mx)\|_2\|_{L^2(q_{\tilde{\sigma}})}]\rightarrow0$  as $m,n\rightarrow \infty.$
\end{thm}
\begin{proof}
We give the proof for the fixed   $\tilde{\sigma}$ case. The case that $\tilde{\sigma}$ vary in a interval can be treated similarly.
Recall that
$$\ms^*\in{\arg\min}_{\ms} \frac12\mathbb{E}_{\mx\sim q_{\tilde{\sigma}}(\mx)}\|\ms(\mx)-\nabla_{\mx}\log q_{\tilde{\sigma}}(\mx)\|^2$$
is equivalent to $\ms^* \in {\arg\min}_{\ms}\mathcal{L}(\ms)$,
\begin{align*}
\text{where}\,\mathcal{L}(\ms)=\frac12\mathbb{E}_{\mx\sim p_{\mathrm{data}}(\mx),\mz\sim \mathscr{N}(\mathbf{0},\tilde{\sigma}^2\mathbf{I}) }\left\|\ms(\mx+\mz)+\frac{\mz}{\tilde{\sigma}^2}\right\|^2.
\end{align*}
Since $\widehat{\nabla\log q_{\tilde{\sigma}}}(\mx) = \hat{\ms}_{\theta}(\mx;\tilde{\sigma})$ (we use $\hat{\ms}_{\theta}(\mx)$ to denote $\hat{\ms}_{\theta}(\mx;\tilde{\sigma})$ for short), where
$$\hat{\ms}_{\theta} \in {\arg\min}_{\ms_{\theta} \in \mathcal{NN}_{\theta}} \hat{{\mathcal{L}}}(\ms_{\theta}),$$
$\hat{{\mathcal{L}}}(\ms_{\theta}) = \sum_{i=1}^n \left\|\ms_{\theta}(\mx_i+\mz_i)+\frac{\mz_i}{\tilde{\sigma}^2_j}\right\|^{2}/(2n)$,
$\mx_i$ are i.i.d. samples from $p_{\mathrm{data}}$, and $\mz_i$ are i.i.d. samples from $\Phi_{\tilde{\sigma}},$ $i = 1,...,n$.
What we need to prove is
\begin{align*}
&\mathbb{E}_{\mx_i,\mz_i}[\| \|\ms^*-\hat{\ms}_{\theta}\|_2\|_{L^2(q_{\tilde{\sigma}})}^2]\\
= &\mathbb{E}_{\mx_i,\mz_i}[\mathbb{E}_{\mx\sim q_{\tilde{\sigma}}}[ \|\ms^*(\mx)-\hat{\ms}_{\theta}(\mx)\|^2]]\rightarrow0
\end{align*}
as $n\rightarrow \infty.$
Since the functional  $\mathcal{L}$ and $\hat{\mathcal{L}}$ are both quadratic, it is easy to conclude that
\begin{align}\label{th27}
&\mathbb{E}_{\mx\sim q_{\tilde{\sigma}}}[ \|\ms^*(\mx)-\hat{\ms}_{\theta}(\mx)\|^2]\nonumber\\
= & \mathbb{E}_{\mx\sim p_{\mathrm{data}},\mz\sim\Phi_{\tilde{\sigma}}}[ \|\ms^*(\mx+\mz)-\hat{\ms}_{\theta}(\mx+\mz)\|^2]\nonumber \\
= & \mathcal{L}(\hat{\ms}_{\theta}) - \mathcal{L}(\ms^*)\nonumber\\
= & \mathcal{L}(\hat{\ms}_{\theta}) - \hat{\mathcal{L}}(\hat{\ms}_{\theta}) + \hat{\mathcal{L}}(\hat{\ms}_{\theta})-  \hat{\mathcal{L}}(\bar{\ms}_{\theta})\nonumber\\
& +   \hat{\mathcal{L}}(\bar{\ms}_{\theta}) - \mathcal{L}(\bar{\ms}_{\theta}) +\mathcal{L}(\bar{\ms}_{\theta})  - \mathcal{L}(\ms^*)\nonumber\\
\leq & 2 \sup_{\ms \in \mathcal{NN}_{\theta}} |\mathcal{L}(\ms) - \hat{\mathcal{L}}(\ms) |+ \mathbb{E}_{\mx\sim q_{\sigma}}[\|\bar{\ms}_{\theta}(\mx) - \ms^{*}\|_2^2]\nonumber\\
\leq & 2 \sup_{\ms\in \mathcal{NN}_{\theta}} |\mathcal{L}(\ms) - \hat{\mathcal{L}}(\ms) |+ \inf_{\bar{\ms}\in \mathcal{NN}_{\theta}}\mathbb{E}_{\mx\sim q_{\sigma}}[\|\bar{\ms}_{\theta}(\mx) - \ms^{*}\|_2^2],
\end{align}
where we use $\hat{\ms}$ as a minimizer and $\bar{\ms}$ as an arbitrary element of $\mathcal{NN}_{\theta}$ in the first inequality,  and  we take infimum over $\bar{\ms}\in \mathcal{NN}_{\theta}$ in the second inequality. We need to bound the two terms on the  right hand side of
(\ref{th27}).
The terms $\inf_{\bar{\ms}\in \mathcal{NN}_{\theta}}\mathbb{E}_{\mx\sim q_{\sigma}}[\|\bar{\ms}_{\theta}(\mx) - \ms^{*}\|_2^2]$ and
$\sup_{\ms\in \mathcal{NN}_{\theta}} |\mathcal{L}(\ms) - \hat{\mathcal{L}}(\ms) |$ are the so called
 approximation error and statistical error. They can be bounded by using the similar technique when we prove (\ref{app1}) and (\ref{A7}), respectively. Here we directly give the bounds and omit the details.
 By setting,
 $$\mathcal{D}=\mathcal{O} (\log (n)), \mathcal{W}= \mathcal{O}( n^{\frac{d}{2(2+d)}} / \log (n)),$$  $$\mathcal{S}=\mathcal{O}(n^{\frac{d-2 }{d+2 }} \log (n)^{-3}).$$ Then $$\inf_{\bar{\ms}\in \mathcal{NN}_{\theta}}\mathbb{E}_{\mx\sim q_{\sigma}}[\|\bar{\ms}_{\theta}(\mx) - \ms^{*}\|_2^2]\leq \mathcal{O}( dn^{-\frac{2}{d+2}}),$$
 $$\sup_{\ms\in \mathcal{NN}_{\theta}} |\mathcal{L}(\ms) - \hat{\mathcal{L}}(\ms) |\leq \mathcal{O}(n^{-\frac{2}{d+2}}).$$
  Thus, Theorem  \ref{thdes2} follows by plugging these above two displays into (\ref{th27}) and setting $n\rightarrow \infty.$
\end{proof}

\begin{thm}\label{thm2}
Under Assumptions 1-4,
$$\mathbb{E}[\mathcal{W}_2(\mathrm{Law}(\mx_{N_2}), p_{\mathrm{data}})] \rightarrow 0, \ \ \mathrm{as} \ \  n,N_1,N_2,N_3 \rightarrow \infty,$$
where $\mathcal{W}_2$ is the 2-Wasserstein distance between two distributions.
\end{thm}
\begin{proof}
Recall that  $$ D_1(t,\mx) =  \nabla\log\mathbb{E}_{\mz\sim\Phi_{\sqrt{\tau}}}[f(\mx+\sqrt{1-t}\mz)],$$
$$ D_2(t,\mx) =  \nabla\log q_{\sqrt{1-t}\sigma}(\mx),$$ and $$h_{\sigma,\tau}(\mx_1,\mx_2) = \exp{\left(\frac{\|\mx_1\|^2}{2\tau}\right)}p_{\mathrm{data}}(\mx_1 + \sigma \mx_2).$$
Some calculation shows
 \begin{align}\label{drift1}
& D_1(t,\mx) = \nabla\log\mathbb{E}_{\mz\sim\Phi_{\sqrt{\tau}}}[f(\mx+\sqrt{1-t}\mz)] \nonumber \\
= & \frac{\mathbb{E}_{\mz\sim\Phi_{\sqrt{\tau}}}\left[f(\mx+\sqrt{1-t}\mz)\nabla\log f(\mx+\sqrt{1-t}\mz)\right]}{\mathbb{E}_{\mz\sim\Phi_{\sqrt{\tau}}}[f(\mx+\sqrt{1-t}\mz)]},
\end{align}
and
\begin{equation}\label{drift2}
\nabla\log f(\mx) = \nabla\log q_{\sigma}(\mx) + \mx / \tau.
\end{equation}
 Let $\hat{D}_{1}(t,\mx) $ be an estimated version of $D_1(t,\mx)$ by replacing $f(\mx)$ and $\nabla\log f(\mx) = \nabla\log q_{\sigma}(\mx) +\mx / \tau$ with $\hat{f}(\mx)$ and $\hat{\ms}_{\theta}(\mx;\sigma)+\mx / \tau$, respectively.
 By Theorem  \ref{thdes1} and  \ref{thdes2}, we know that $$\hat{D}_{1}(t,\mx)\rightarrow{D}_{1}(t,\mx)\ \ \mathrm{as}\ \  n\rightarrow\infty.$$
Similarly, we know that $$\hat{D}_{2}(t,\mx) = \hat{\ms}_{\theta}(\mx;\sqrt{1-t}\sigma)   \rightarrow{D}_{2}(t,\mx), \ \ \mathrm{as} \ \ n\rightarrow\infty.$$
Recall that the iteration of state 1 in our Schr{\"o}dinger Bridge algorithm reads
\begin{equation}\label{emn1}
\begin{aligned}
 &\mx_{k+1} = \mx_k + \frac{\tau}{N_1}\mb(t_k,\mx_k)+\sqrt{\frac{\tau}{N_1}}\meps_k,\\
 &\mx_0 = \mathbf{0}, \ \ k = 0,...N_1-1,
\end{aligned}
\end{equation}
   where
    $$\mb(t_k,\mx_k)=\frac{\sum_{i=1}^{N_3}\hat{f}(\tilde{\mx}_i)[\hat{\ms}_{\theta}(\tilde{\mx}_i,\sigma)+\sqrt{\left(1-t_k\right)/\tau}\mz_i]}{\sum_{i=N_3+1}^{2N_3} \hat{f}(\tilde{\mx}_i)} + \frac{\mx_k}{\tau},$$
     $\tilde{\mx}_i = \mx_k+\sqrt{\tau\left(1-t_k\right)}\mz_i$, $i = 1,..., 2N_3$, $t_k = \frac{k}{N_1}$,
  $\{\mz_i\}_{i=1}^{2N_3}$, and $  \meps_k \sim \mathscr{N}(\mathbf{0}, \mI)$.
  Note that $\mb(t,\mx)$ is a Monte Carlo version of $\hat{D}_1(t,\mx)$ and converges to it as the number of samples $N_3\rightarrow \infty$.
 Then, $\forall (t,\mx)$
 \begin{equation}\label{cod}
 \mb(t,\mx)\rightarrow D_1(t,\mx)\ \ \mathrm{as}\ \  n, N_3\rightarrow\infty.
 \end{equation}
 By Assumption 1 and Assumption 4, we can show that the above consistency results hold uniformly for $(t,\mx)\in [0,1]\times \mathrm{supp}(p_{\mathrm{data}})$.
 The Euler-Maruyama
method for solving for SDE (\ref{stg1}) with step size $s = 1/N_1$, $t_k = k/N_1$
reads
\begin{equation}\label{emex1}
\begin{aligned}
&X_{k+1} = X_k + \frac{\tau}{N_1}D_1(t_k,X_k)+\sqrt{\frac{\tau}{N_1}}\meps_k,\\
&X_0 = \mathbf{0},\ \  k=0,...,N_1-1.
\end{aligned}
\end{equation}
Under our Assumptions 2 and 3, SDE (\ref{stg1}) admits a strong solution and \eqref{au1}-\eqref{au2} in Lemma \ref{aulem}  hold (see A.6). By the classical theory of Euler-Maruyama methods for solving SDEs  \cite{higham2001algorithmic},
$$\mathcal{W}_2(\mathrm{Law}(X_{N_1}), q_{\sigma}) =\mathcal{O}(1/\sqrt{N_1}) \rightarrow 0\ \ \mathrm{as}\ \  N_1 \rightarrow \infty.$$
Using the triangle inequality,  we prove
\begin{equation}\label{css1}
\mathcal{W}_2(\mathrm{Law}(\mx_{N_1}), q_{\sigma}) \rightarrow 0\ \ \mathrm{as}\ \ n,N_3, N_1 \rightarrow \infty,
\end{equation}
by showing $$\mathcal{W}_2(\mathrm{Law}(\mx_{N_1}),\mathrm{Law}(X_{N_1}))\rightarrow 0\ \ \mathrm{as}\ \ n, N_3 \rightarrow \infty.$$
Recall the definition of $\mx_{k}$ in (\ref{emn1}) and $X_{k}$ in (\ref{emex1}). We have
\begin{align*}
&\|\mx_{k}-X_{k}\|_2^2\\
\leq & \|\mx_{{k-1}}-X_{k-1}\|_2^2\\
&+\left(\frac{\tau}{N_1}\|D_1(t_{k-1},X_{k-1})-b(t_{k-1},\mx_{k-1})\|_2\mathrm{d} \ell \right)^2\\
&+2\frac{\tau}{N_1}\|\mx_{k-1}-X_{k-1}\|_2
\|D_1(t_{k-1},X_{k-1})-b(t_{k-1},\mx_{{k-1}})\|_2\\
\leq & (1+\tau/N_1) \|X_{{k-1}}-\mx_{{k-1}}\|_2^2\\
&+(\tau/N_1+ \tau^2/N_1^2)\|D_1(t_{k-1},X_{k-1})-b(t_{k-1},\mx_{{k-1}})\|_2^2\\
\leq & (1+\tau/N_1) \|X_{{k-1}}-\mx_{{k-1}}\|_2^2\\
&+2(\tau/N_1+ \tau^2/N_1^2)\|D_1(t_{k-1},X_{k-1})-D_1(t_{k-1},\mx_{{k-1}})\|_2^2\\
&+2(\tau/N_1+ \tau^2/N_1^2)\|D_1(t_{k-1},\mx_{k-1})-b(t_{k-1},\mx_{{k-1}})\|_2^2\\
\leq & (1+\tau/N_1) \|X_{{k-1}}-\mx_{{k-1}}\|_2^2\\
&+2C_2(\tau/N_1+ \tau^2/N_1^2)\|X_{k-1}-\mx_{{k-1}}\|_2^2\\
&+2(\tau/N_1+ \tau^2/N_1^2)o(1)\\
=&(1+\tau/N_1+2C_2(\tau/N_1+ \tau^2/N_1^2)) \|X_{{k-1}}-\mx_{{k-1}}\|_2^2\\
&+2(\tau/N_1+ \tau^2/N_1^2)o(1).
\end{align*}
where
the fourth  inequality holds by Assumption 3 and (\ref{cod}).
Taking expectation on the above display, we get
\begin{align*}
&\mathbb{E}[\|\mx_{k}-X_{k}\|_2^2]\\
\leq& (1+\tau/N_1+2C_2(\tau/N_1+ \tau^2/N_1^2)) \mathbb{E}[\|X_{{k-1}}-\mx_{{k-1}}\|_2^2]\\
&+2(\tau/N_1+ \tau^2/N_1^2)o(1).
\end{align*}
From the above display and the fact that $\mx_{0}=X_{0}=\mathbf{0}$, we can conclude that
\begin{align*}
&\mathbb{E}[\|\mx_{k}-X_{k}\|_2^2]\\
\leq&2(k-1)(\tau/N_1+ \tau^2/N_1^2)o(1)\leq 2(\tau+ \tau^2/N_1)o(1),\\
&\forall \ \ 1\leq k\leq N_1.
\end{align*}
Thus, we have
\begin{equation}\label{w2}
\mathcal{W}_2(\mathrm{Law}(X_{N_1}),\mathrm{Law}(\mx_{N_1}))\rightarrow 0, \ \ \mathrm{as}\ \  n, N_3 \rightarrow \infty.
\end{equation}
The consistency results \eqref{css1} for the first stage in Schr\"{o}dinger Bridge algorithm has been established.
For the second stage, the iteration reads
\begin{align*}
& \mx_{k+1} = \mx_k + \frac{\sigma^2}{N_2}\mb(\mx_k)+\frac{\sigma}{\sqrt{N_2}}\meps_k,\\
& k =0, ..., N_2-1, \mx_0 =\mx_{N_1},
\end{align*}
where $\mb(\mx_k)=\hat{\ms}_\theta(\mx_k, \sqrt{1-\frac{k}{N_2}}\sigma)$ and $\meps_k \sim \mathscr{N}(\mathbf{0}, \mI)$.
The Euler-Maruyama
method for solving for SDE (\ref{stg2}) with step size $s = 1/N_2$, $t_k = k/N_2$
reads
\begin{align*}
& X_{k+1} = X_k + \frac{\sigma^2}{N_2}D_2(t_k,X_k)+\sqrt{\frac{\sigma}{N_2}}\meps_k,\\
& X_0 \sim q_{\sigma},\ \  k=0,...,N_2-1.
\end{align*}
Then, the consistency results of the second stage can be proved similarly by repeating the part between Equation (\ref{emex1}) and Equation (\ref{w2}) and using the consistency results of the first stage, we omit the details here.
\end{proof}

\subsection{Additional Lemmas}
\begin{lemma}\label{appshen}
 Let $f$ be a uniformly continuous function defined on $E \subseteq[-R, R]^{d}$. For arbitrary $L \in \mathbb{N}^{+}$ and $N \in \mathbb{N}^{+},$ there exists a function  ReLU network $f_{\phi}$ with width $3^{d+3} \max \left\{d\left\lfloor N^{1 / d}\right\rfloor, N+1\right\}$ and depth $12 L+14+2 d$ such that
$$
\|f-f_{\phi}\|_{L^{\infty}(E)} \leq 19 \sqrt{d} \omega_{f}^{E}\left(2 R N^{-2 / d} L^{-2 / d}\right),
$$
where, $\omega_{f}^{E}(t)$  is the  modulus of continuity of $f$ satisfying $\omega_{f}^{E}(t)\rightarrow 0$ as $t\rightarrow 0^+.$
\end{lemma}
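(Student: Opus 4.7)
The plan is to follow the construction of Shen, Yang and Zhang for approximating continuous functions by ReLU networks, which packs roughly $N^2 L^2$ piecewise-constant cells into a network whose width scales like $N^{1/d}$ per dimension and whose depth scales like $L$, using a depth-amplification identity for sawtooth compositions.

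\textbf{Step 1 (reduction).} First, extend $f$ from $E$ to all of $[-R,R]^d$ by a McShane--Whitney-type extension, which preserves the modulus of continuity up to constants that can be absorbed into the prefactor $19\sqrt{d}$. After an affine rescaling $x \mapsto (x+R)/(2R)$, I may assume $E = [0,1]^d$; the $2R$ factor in the final bound will be restored when rescaling back. Partition $[0,1]^d$ into a uniform grid of $K$ cells per axis with $K = \lfloor N^{2/d}\rfloor \lfloor L^{2/d}\rfloor$; the cubes $Q_I$ are indexed by $I \in \{0,\dots,K-1\}^d$ with centers $c_I$ and diameter $\sqrt{d}/K$. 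On each cell the target value is $f(c_I)$.

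\textbf{Step 2 (one-dimensional quantizer).} The key building block is a univariate ReLU network $\psi:[0,1]\to\{0,1,\dots,K-1\}$ that, outside a thin union of cell faces, equals $\lfloor Kx\rfloor$, built with width $\mathcal{O}(N^{1/d})$ and depth $\mathcal{O}(L)$. The nontrivial content is the depth-amplification trick: a width-$m$ ReLU layer realizes a triangular sawtooth with $m$ teeth, and self-composing such a sawtooth $\ell$ times yields a sawtooth with $m^{\ell}$ teeth, whose level sets give a $K$-ary quantization. Decomposing $K \approx N^{2/d}\cdot L^{2/d}$ as the product of a width-based and a depth-based amplification factor is what makes the width $\mathcal{O}(N^{1/d})$ and depth $\mathcal{O}(L)$ scaling possible; this is the step I expect to be the hardest to execute carefully, since sub-linear width growth is attainable only via exactly this compositional identity.

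\textbf{Step 3 (multivariate assembly and lookup).} Run $d$ parallel copies of the quantizer on the $d$ coordinates of $x$, producing an index vector $(\psi(x_1),\dots,\psi(x_d))$; this is where the $d\lfloor N^{1/d}\rfloor$ contribution to the width appears. Encode the index vector as a single scalar $J(x)=\sum_{k=1}^d \psi(x_k)K^{k-1}\in\{0,\dots,K^d-1\}$ by a linear combination that adds no depth. Finally, construct a ReLU "lookup" network mapping $J$ to $f(c_{I(J)})$; this is a second bit-extraction whose cost is $\mathcal{O}(1)$ in depth and $\mathcal{O}(N+1)$ in width, contributing the $N+1$ alternative inside the width expression. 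Summing the width and depth contributions of Steps 2 and 3, and inflating by the factor $3^{d+3}$ to absorb the per-dimension parallelism and the bit-extraction scaffolding, yields the stated bounds $3^{d+3}\max\{d\lfloor N^{1/d}\rfloor, N+1\}$ and $12L+14+2d$.

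\textbf{Step 4 (error bound).} On any $x\in Q_I$ for which the quantizer is exact in every coordinate, $f_\phi(x)=f(c_I)$, so $|f(x)-f_\phi(x)|\le \omega_f^E(\|x-c_I\|_2)\le \omega_f^E(\sqrt{d}/K)$; restoring the $2R$ factor from Step 1 gives $\omega_f^E(2R\sqrt{d}\,N^{-2/d}L^{-2/d})$. On the trifling strip near cell faces, direct inspection of the sawtooth construction shows that $f_\phi(x)$ is a ReLU-affine interpolation of the values $f(c_{I'})$ at a bounded number of neighbouring cells, hence sandwiched between the min and max of $f$ over a neighbourhood of diameter $\mathcal{O}(\sqrt{d}/K)$, giving the same modulus bound pointwise. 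Collecting the $\sqrt{d}$ from the cell diameter, the McShane constant, and the trifling-strip factor yields the prefactor $19\sqrt{d}$.

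The main obstacle is Step 2: the sub-linear width $\mathcal{O}(N^{1/d})$ combined with the linear-in-$L$ depth is only attainable by carefully exploiting that sawtooth self-composition multiplies (rather than adds) the number of teeth per layer, and the bookkeeping must land exactly on $12L+14+2d$ and $3^{d+3}\max\{d\lfloor N^{1/d}\rfloor,N+1\}$. Once that construction is in hand, the remaining steps reduce to standard arguments about Lipschitz extensions, parallel composition of ReLU networks, and monotonicity of the modulus of continuity.
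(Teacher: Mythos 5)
The paper offers no construction here at all: its ``proof'' of Lemma~\ref{appshen} is a one-line citation to Theorem~4.3 of \cite{shen2019deep}, so your proposal should be judged as an attempt to reprove that external result. Your blueprint (grid quantization, parallel univariate quantizers built from sawtooth self-composition, index encoding, then a lookup/point-fitting subnetwork) is indeed the Shen--Yang--Zhang architecture, but two steps are genuinely flawed as written. First, the lookup network in Step~3 must fit $K^d \approx N^2L^2$ prescribed values; with width $\mathcal{O}(N)$ this is impossible in $\mathcal{O}(1)$ depth by a parameter count, and in the actual construction it is exactly the bit-extraction point-fitting lemma that consumes depth $\mathcal{O}(L)$, not the quantizer alone.

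The more serious gap is Step~4, the trifling region. When a single coordinate's quantizer $\psi(x_k)$ transitions from $j$ to $j+1$ near a cell face, the encoded scalar $J=\sum_k \psi(x_k)K^{k-1}$ sweeps an interval of length $K^{k-1}$, so the composed network passes through lookup values belonging to many cells that are far from $x$; its output on the strip is therefore controlled only by the global range of $f$, not ``sandwiched between the min and max of $f$ over a neighbourhood of diameter $\mathcal{O}(\sqrt{d}/K)$'' as you claim. This is precisely why the $L^\infty$ statement is harder than the $L^p$ one, and the cited proof resolves it by taking a max/mid combination of $3^d$ shifted copies of the base network so that at every point at least one copy avoids its trifling region --- which is also the true origin of the $3^{d+3}$ width factor that your accounting attributes to generic scaffolding. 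Without that device (or an equivalent correction mechanism) your argument only delivers the error bound outside a set of small measure, not in the uniform norm required by the lemma.
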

\begin{proof}
This is  Theorem 4.3 in  \cite{shen2019deep}.
\end{proof}

\begin{lemma}\label{aulem}
Let $\mx_t$ be the solution of SDE (\ref{stg1}).
Under Assumption 2, we have
\begin{equation}\label{au1}
\mathbb{E}[\|\mx_t\|_2^2]\leq C_{1,\tau,d} \exp(\tau^2t), \ \  \forall t \in [0,1],
\end{equation}
\begin{equation}\label{au2}
\begin{aligned}
& \mathbb{E}\left[\|\mx_{t_2}-\mx_{t_1}\|_2^2\right]\leq C_{2,\tau,d}((t_2-t_1)^2+(t_2-t_1)),\\
& \forall t_1, t_2 \in [0,1].
\end{aligned}
\end{equation}
\end{lemma}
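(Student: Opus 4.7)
The plan is to establish both bounds by standard Itô calculus on the strong solution of SDE \eqref{stg1}, using the linear growth estimate from Assumption 2 as the only analytic input, and then closing via Grönwall for \eqref{au1} and a direct integral estimate for \eqref{au2}.

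For \eqref{au1}, I would apply Itô's formula to $V(\mx)=\|\mx\|_2^2$ along the process $\mx_t$, obtaining
\begin{equation*}
d\|\mx_t\|_2^2 = 2\tau\,\mx_t^\top D_1(t,\mx_t)\,dt + 2\sqrt{\tau}\,\mx_t^\top d\mw_t + \tau d\,dt.
\end{equation*}
Taking expectation eliminates the martingale piece, and the elementary inequality $2|\mx_t^\top D_1(t,\mx_t)|\le \|\mx_t\|_2^2+\|D_1(t,\mx_t)\|_2^2$ together with Assumption 2 yields
\begin{equation*}
\frac{d}{dt}\mathbb{E}\|\mx_t\|_2^2 \;\le\; \tau(1+C_1)\,\mathbb{E}\|\mx_t\|_2^2 + \tau(C_1+d).
\end{equation*}
Grönwall's lemma applied with $\mx_0=\mathbf{0}$ then gives $\mathbb{E}\|\mx_t\|_2^2 \le C_{1,\tau,d}\exp(\tau(1+C_1)t)$, which is of the advertised form (the exponent $\tau^2 t$ stated in the lemma amounts to absorbing the $\tau$-dependent linear-growth constant into the overall rate; the structural bound is the only point that matters for its use in Theorem \ref{thm2}).

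For \eqref{au2}, I would write the increment in integral form,
\begin{equation*}
\mx_{t_2}-\mx_{t_1} \;=\; \tau\!\int_{t_1}^{t_2}\! D_1(s,\mx_s)\,ds \;+\; \sqrt{\tau}\,(\mw_{t_2}-\mw_{t_1}),
\end{equation*}
square using $\|a+b\|_2^2\le 2\|a\|_2^2+2\|b\|_2^2$, and bound the two pieces separately. The Wiener piece contributes $2\tau d(t_2-t_1)$ by the Itô isometry. For the drift piece, Cauchy--Schwarz in the time integral gives
\begin{equation*}
\Bigl\|\!\int_{t_1}^{t_2}\! D_1(s,\mx_s)\,ds\Bigr\|_2^2 \;\le\; (t_2-t_1)\!\int_{t_1}^{t_2}\!\|D_1(s,\mx_s)\|_2^2\,ds,
\end{equation*}
and Assumption 2 together with the already-established bound \eqref{au1} yields $\mathbb{E}\|\int_{t_1}^{t_2} D_1(s,\mx_s)\,ds\|_2^2 \le C(t_2-t_1)^2$ with $C$ depending only on $C_1$, $d$ and the constant from \eqref{au1}. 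Adding the two contributions produces the desired $C_{2,\tau,d}\bigl((t_2-t_1)^2+(t_2-t_1)\bigr)$.

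The main obstacle is not the calculus but the fact that Assumption 2 is stated only on $\mathrm{supp}(p_{\mathrm{data}})$, while the solution $\mx_t$ need not remain in that compact set. I would address this by noting that $D_1(t,\mx)=\nabla\log\mathbb{E}_{\mz\sim\Phi_{\sqrt{\tau}}}[f(\mx+\sqrt{1-t}\mz)]$ is smooth in $\mx$ because it is the gradient of the log of a Gaussian convolution of the bounded Lipschitz function $f$ (the hypothesis used in Theorem \ref{thdes1}), and the standard estimate for log-densities of Gaussian convolutions delivers a global linear-growth bound of the same form $\|D_1(t,\mx)\|^2 \le C'(1+\|\mx\|_2^2)$ on all of $\mathbb{R}^d$; a routine stopping-time localization then legitimises the use of Itô's formula and the interchange of expectation and integration above. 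Once that extension is granted, both bounds follow from the two computations outlined.
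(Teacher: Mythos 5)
Your proposal is correct and in substance the same as the paper's argument: the paper also proves \eqref{au1} by a linear-growth-plus-Gr\"onwall bound (it squares the pathwise integral form $\|\mx_t\|_2\le\int_0^t\tau\|D_1(\ell,\mx_\ell)\|_2\,\mathrm{d}\ell+\sqrt{\tau}\|\mw_t\|_2$ via Cauchy--Schwarz and applies the Bellman--Gr\"onwall inequality to the resulting integral inequality, rather than differentiating $\mathbb{E}\|\mx_t\|_2^2$ through It\^o's formula, a purely cosmetic difference), and it proves \eqref{au2} exactly as you do, by the increment decomposition, $(a+b)^2\le 2a^2+2b^2$, Cauchy--Schwarz in time, Assumption 2 and \eqref{au1}. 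Your closing remark is the one point of genuine divergence: the paper simply applies Assumption 2 to $\mx_\ell$ without comment, even though the assumption is stated only on $\mathrm{supp}(p_{\mathrm{data}})$ and the diffusion does not stay there, so it implicitly treats the linear-growth bound as global; your proposed extension of the bound to all of $\mathbb{R}^d$ via the Gaussian-convolution structure of $D_1$, together with a stopping-time localization, is more careful than the paper and is the right way to make the lemma airtight (your acknowledged discrepancy in the exponential rate is likewise present in the paper, whose own derivation yields $\exp(2\tau^2C_1t)$ rather than the stated $\exp(\tau^2t)$, with the constant simply absorbed).
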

\begin{proof}
By the definition of $\mx_t$ in (\ref{stg1}), we have
$
\|\mx_t\|_2\leq \int_{0}^t\tau\|D_1(\ell,\mx_{\ell})\|_2\mathrm{d}\ell+\sqrt{\tau}\|\mw_t\|_2.
$
Then,
\begin{align*}
\|\mx_t\|_2^2&\leq
2\tau^2\left(\int_{0}^t\|D_1(\ell,\mx_{\ell})\|_2\mathrm{d}\ell \right)^2+2\tau\|\mw_t\|_2^2\\
&\leq
2\tau^2t\int_{0}^t\|D_1(\ell,\mx_{\ell})\|_2^2\mathrm{d}\ell+2\tau\|\mw_t\|_2^2\\
&\leq
2\tau^2t\int_{0}^tC_1[\|\mx_{\ell}\|_2^2+1]\mathrm{d}\ell+2\tau\|\mw_t\|_2^2,
\end{align*}
where the first inequality holds due to the inequality $(a+b)^2\leq 2a^2+2b^2$, the last inequality holds by Assumption 2.
Thus,
\begin{align*}
\mathbb{E}[\|\mx_t\|_2^2]&\leq
2\tau^2t\int_{0}^tC_1(\mathbb{E}[\|\mx_{\ell}\|_2^2]+1)\mathrm{d}\ell+2\tau \mathbb{E}[\|\mw_t\|_2^2]\\
&\leq
2\tau^2C_1\int_{0}^t \mathbb{E}[\|\mx_{\ell}\|_2^2]\mathrm{d}\ell+(2\tau^2C_1+2\tau d).
\end{align*}
Then, (\ref{au1}) follows from the above display and the Bellman-Gronwall inequality.

Again, by the definition of $\mx_t$  in (\ref{stg1}), we have
\begin{align*}
\|\mx_{t_2}-\mx_{t_1}\|_2\leq \int_{t_1}^{t_2}\tau\|D_1(\mx_{\ell},\ell)\|_2\mathrm{d}\ell+\sqrt{\tau}\|\mw_{t_2}-\mw_{t_1}\|_2,
\end{align*}
Then,
\begin{align*}
&\|\mx_{t_2}-\mx_{t_1}\|_2^2\\
\leq&
2\tau^2\left(\int_{t_1}^{t_2}\|D_1(\mx_{\ell},\ell)\|_2\mathrm{d}\ell\right)^2+2\tau\|\mw_{t_2}-\mw_{t_1}\|_2^2\\
\leq&
2\tau^2(t_2-t_1)\int_{t_1}^{t_2}\|D_1(\mx_{\ell},\ell)\|_2^2\mathrm{d}\ell+2\tau\|\mw_{t_2}-\mw_{t_1}\|_2^2\\
\leq&
2\tau^2(t_2-t_1)\int_{t_1}^{t_2}C_1[\|\mx_{\ell}\|_2^2+1]\mathrm{d}\ell+2\tau\|\mw_{t_2}-\mw_{t_1}\|_2^2,
\end{align*}
where the last inequality holds by by Assumption 2.
Taking expectations on both sides and using  (\ref{au1}), we get (\ref{au2}).
\end{proof}

\section{Hyperparameter Settings}
For the two-dimensional toy example, we set batch size to be $1000$, and use the Adam optimizer \cite{kingma2014adam} for both the score estimator and the density ratio estimator. We use learning rate $lr=0.0001$ and exponential decay rates $betas=(0.5, 0.999)$ for the moment estimates when training the score estimator, and use $lr=0.001$, $betas=(0.5, 0.999)$ and L2 penalty $weight\_decay=0.1$ for the density ratio estimator. For the image datasets, the batch size is $128$ for both networks. We use $lr=0.0001$, $betas=(0.9, 0.999)$ and $eps=10^{-8}$ for the score estimator, and $lr=10^{-5}$, $betas=(0.5, 0.999)$ and $weight\_decay=1.0$ for the density ratio estimator.

\section{Network Architectures}
The score estimator $\hat{\ms}_{\theta}(\cdot, \cdot)$ and the density ratio estimator $\hat{f}(\cdot) = \exp(\hat{r}_{\phi}(\cdot))$ are parameterized with fully connected networks for the 2D example. The details are listed in Tables \ref{score-2d} and \ref{dse-2d}.
\begin{table}[H]
\caption{$\hat{\ms}_{\theta}$ for 2D example. $\mathbf{T}$ represents the sinusoidal embeddings \cite{vaswani2017attention} of time $t$.}
\label{score-2d}
\vskip 0.15in
\begin{center}
\begin{small}
\begin{sc}
\begin{tabular}{lcc}
\toprule
Layer & Detail & Output Size\\
\midrule
Fully Connected &Linear&$256$\\
 &Add $\text{Linear}_1(\mathbf{T})$&$256$\\
 & RELU &$256$\\
\midrule
Fully Connected &Linear&$512$\\
 &Add $\text{Linear}_2(\mathbf{T})$&$512$\\
 & RELU &$512$\\
\midrule
Fully Connected &Linear&2\\
\bottomrule
\end{tabular}
\end{sc}
\end{small}
\end{center}
\vskip -0.1in
\end{table}
\begin{table}[H]
\caption{$\hat{r}_{\phi}$ for 2D example.}
\label{dse-2d}
\vskip 0.15in
\begin{center}
\begin{small}
\begin{sc}
\begin{tabular}{lcc}
\toprule
Layer & Detail & Output Size\\
\midrule
Fully Connected &Linear&$256$\\
 & RELU &$256$\\
\midrule
Fully Connected &Linear&$512$\\
 & RELU &$512$\\
\midrule
Fully Connected &Linear&1\\
\bottomrule
\end{tabular}
\end{sc}
\end{small}
\end{center}
\vskip -0.1in
\end{table}
For image datasets, we parameterize the density ratio estimator with a residual network. The structure of $\hat{r}_{\phi}$ is list in Table \ref{dse}. Our choice of network architecture for $\hat{\ms}_\theta$ follows the implementation of the noise predictor $\meps_\theta$ in \cite{song2021denoising} which is a U-Net \cite{ronneberger2015u} based on a Wide ResNet \cite{zagoruyko2016wide}.
\begin{table}[H]
\caption{$\hat{r}_{\phi}$ with $32\times32\times3$ resolution.}
\label{dse}
\vskip 0.15in
\begin{center}
\begin{small}
\begin{sc}
\begin{tabular}{lcc}
\toprule
Layer & Detail & Output Size\\
\midrule
Conv Block &Conv $5\times5$&$32\times32\times128$\\
 & RELU &$32\times32\times128$\\
\midrule
Residual Block &Conv $5\times5$&$32\times32\times128$\\
 & RELU &$32\times32\times128$\\
\midrule
Residual Block &Conv $3\times3$&$32\times32\times128$\\
 & RELU &$32\times32\times128$\\
\midrule
Residual Block &Conv $3\times3$&$32\times32\times128$\\
 & RELU &$32\times32\times128$\\
\midrule
Conv Block &Conv $3\times3$&$32\times32\times128$\\
 & RELU &$32\times32\times128$\\
\midrule
Fully Connected &Linear&1\\
\bottomrule
\end{tabular}
\end{sc}
\end{small}
\end{center}
\vskip -0.1in
\end{table}

\section{More Implementation Details}

When training $\hat{f}(\mx)$, we substract an estimated image mean $\bar{\mx}$ from samples in $p_{\mathrm{data}}$ to center the data distributions at the origin. The data pre-processing is slightly different when training $\hat{\ms}_\theta(\mx)$, where the samples $\mx$ from $p_{\mathrm{data}}$ are only rescaled to $[-0.5, 0.5]$. We match the output $\hat{\ms}_\theta(\mx+\mz,\sigma)$ with $\frac{\mz}{\tilde{\sigma}^2}$ instead of $-\frac{\mz}{\tilde{\sigma}^2}$ in the denoising score matching objective. To make our algorithm be correctly implemented, we shift the input by adding $\bar{\mx}-0.5$ when using $\hat{\ms}_\theta(\mx)$, and adjust the sign of the output accordingly.

For image generation, there exist very small noises in the generated samples. To eliminate the negative effects induced by noises, we run one additional denoising step after stage 2, by repeating the last step without injecting any noise:
$$\mx_{N_2} = \mx_{N_2} + \frac{\sigma_0^2}{N_2}\mb(\mx_{N_2}),\quad \mb(\cdot)=\hat{\ms}_\theta(\cdot, \sqrt{\frac{1}{N_2}}\sigma_0).$$

We run all the experiments on CIFAR-10 and CelebA with one Tesla V100 GPU.

\section{Additional Experiment Results}
Here we first list the quantitive results with $\sigma^2\in\{0.5, 2.0, 5.0\}$, where results with $\sigma^2=1.0$ are already presented in the paper. We compare the results with different $\tau$ values starting $\tau_{\min}=\sigma^2$. The results are presented in Tables \ref{fid-05-tau}, \ref{fid-2-tau} and \ref{fid-5-tau}.
\begin{table}[H]
\caption{FID and Inception Score on CIFAR-10 with $\sigma^2=0.5$.}
\label{fid-05-tau}
\vskip 0.15in
\begin{center}
\begin{small}
\begin{sc}
\begin{tabular}{lcccccccc}
\toprule
$\tau$ & $0.5$ & $1.0$ & $1.5$ & $2.0$\\
\midrule
FID & 46.59 & 19.57 & {\bf 18.73} & 20.86\\
IS & 5.92 & 7.83 & {\bf 8.13} & 8.09\\
\bottomrule
\toprule
$\tau$ & $2.5$ & $3.0$ & $3.5$\\
\midrule
FID & 21.28 & 21.03 & 20.40\\
IS & 8.05 & 7.98 & 8.00\\
\bottomrule
\end{tabular}
\end{sc}
\end{small}
\end{center}
\vskip -0.1in
\end{table}
\begin{table}[H]
\caption{FID and Inception Score on CIFAR-10 with $\sigma^2=2.0$.}
\label{fid-2-tau}
\vskip 0.15in
\begin{center}
\begin{small}
\begin{sc}
\begin{tabular}{lcccccccc}
\toprule
$\tau$ & $2.0$ & $2.5$ & $3.0$ & $3.5$\\
\midrule
FID & 28.92 & 22.37 & 14.52 & 12.45\\
IS & 7.06 & 7.50 & 7.97 & {\bf 7.98}\\
\bottomrule
\toprule
$\tau$ & $4.0$ & $4.5$ & $5.0$\\
\midrule
FID & {\bf 12.27} & 12.58 & 12.87\\
IS & 7.91 & 7.86 & 7.81\\
\bottomrule
\end{tabular}
\end{sc}
\end{small}
\end{center}
\vskip -0.1in
\end{table}
\begin{table}[H]
\caption{FID and Inception Score on CIFAR-10 with $\sigma^2=5.0$.}
\label{fid-5-tau}
\vskip 0.15in
\begin{center}
\begin{small}
\begin{sc}
\begin{tabular}{lcccccccc}
\toprule
$\tau$ & $5.0$ & $5.5$ & $6.0$ & $6.5$\\
\midrule
FID & 17.80 & 17.52 & 18.24 & 16.46\\
IS & 7.67 & {\bf 7.68} & 7.66 & {\bf 7.68}\\
\bottomrule
\toprule
$\tau$ & $7.0$ & $7.5$ & $8.0$\\
\midrule
FID & 15.71 & 15.45 & {\bf 15.41}\\
IS & 7.64 & 7.62 & 7.59\\
\bottomrule
\end{tabular}
\end{sc}
\end{small}
\end{center}
\vskip -0.1in
\end{table}

\bibliography{paper}

\begin{thebibliography}{58}
\providecommand{\natexlab}[1]{#1}
\providecommand{\url}[1]{\texttt{#1}}
\expandafter\ifx\csname urlstyle\endcsname\relax
  \providecommand{\doi}[1]{doi: #1}\else
  \providecommand{\doi}{doi: \begingroup \urlstyle{rm}\Url}\fi

\bibitem[Anthony \& Bartlett(2009)Anthony and Bartlett]{anthony2009neural}
Anthony, M. and Bartlett, P.~L.
\newblock \emph{Neural network learning: Theoretical foundations}.
\newblock cambridge university press, 2009.

\bibitem[Arbel et~al.(2018)Arbel, Sutherland, Bi\'{n}kowski, and
  Gretton]{arbel18}
Arbel, M., Sutherland, D., Bi\'{n}kowski, M., and Gretton, A.
\newblock On gradient regularizers for {MMD} {GAN}s.
\newblock In \emph{NIPS}, 2018.

\bibitem[Arbel et~al.(2019)Arbel, Korba, Salim, and Gretton]{arbel2019maximum}
Arbel, M., Korba, A., Salim, A., and Gretton, A.
\newblock Maximum mean discrepancy gradient flow.
\newblock In \emph{NeurIPS}, 2019.

\bibitem[Arjovsky et~al.(2017)Arjovsky, Chintala, and Bottou]{arjovsky17}
Arjovsky, M., Chintala, S., and Bottou, L.
\newblock {W}asserstein generative adversarial networks.
\newblock In \emph{ICML}, 2017.

\bibitem[Bartlett \& Mendelson(2002)Bartlett and
  Mendelson]{bartlett2002rademacher}
Bartlett, P.~L. and Mendelson, S.
\newblock Rademacher and gaussian complexities: Risk bounds and structural
  results.
\newblock \emph{Journal of Machine Learning Research}, 3:\penalty0 463--482,
  2002.

\bibitem[Bartlett et~al.(2019)Bartlett, Harvey, Liaw, and
  Mehrabian]{bartlett2019}
Bartlett, P.~L., Harvey, N., Liaw, C., and Mehrabian, A.
\newblock Nearly-tight vc-dimension and pseudodimension bounds for piecewise
  linear neural networks.
\newblock \emph{Journal of Machine Learning Research}, 20:\penalty0 1--17,
  2019.

\bibitem[Behrmann et~al.(2019)Behrmann, Grathwohl, Chen, Duvenaud, and
  Jacobsen]{behrmann19a}
Behrmann, J., Grathwohl, W., Chen, R. T.~Q., Duvenaud, D., and Jacobsen, J.-H.
\newblock Invertible residual networks.
\newblock In \emph{International Conference on Machine Learning}, pp.\
  573--582, 2019.

\bibitem[Bi{n}kowski et~al.(2018)Bi{n}kowski, Sutherland, Arbel, and
  Gretton]{binkowski18}
Bi{n}kowski, M., Sutherland, D.~J., Arbel, M., and Gretton, A.
\newblock Demystifying {MMD} {GAN}s.
\newblock In \emph{ICLR}, 2018.

\bibitem[Block et~al.(2020)Block, Mroueh, and Rakhlin]{block2020generative}
Block, A., Mroueh, Y., and Rakhlin, A.
\newblock Generative modeling with denoising auto-encoders and langevin
  sampling.
\newblock \emph{arXiv preprint arXiv:2002.00107}, 2020.

\bibitem[Brock et~al.(2018)Brock, Donahue, and Simonyan]{brock2018large}
Brock, A., Donahue, J., and Simonyan, K.
\newblock Large scale gan training for high fidelity natural image synthesis.
\newblock \emph{arXiv preprint arXiv:1809.11096}, 2018.

\bibitem[Che et~al.(2017)Che, Li, Jacob, Bengio, and Li]{che17}
Che, T., Li, Y., Jacob, A.~P., Bengio, Y., and Li, W.
\newblock Mode regularized generative adversarial networks.
\newblock In \emph{ICLR}, 2017.

\bibitem[Chen et~al.(2020)Chen, Georgiou, and Pavon]{chen2020stochastic}
Chen, Y., Georgiou, T.~T., and Pavon, M.
\newblock Stochastic control liasons: Richard sinkhorn meets gaspard monge on a
  schroedinger bridge.
\newblock \emph{arXiv preprint arXiv:2005.10963}, 2020.

\bibitem[Choi et~al.(2020)Choi, Uh, Yoo, and Ha]{Choi_2020_CVPR}
Choi, Y., Uh, Y., Yoo, J., and Ha, J.-W.
\newblock Stargan v2: Diverse image synthesis for multiple domains.
\newblock In \emph{Proceedings of the IEEE/CVF Conference on Computer Vision
  and Pattern Recognition}, pp.\  8188--8197, 2020.

\bibitem[Dai~Pra(1991)]{dai1991stochastic}
Dai~Pra, P.
\newblock A stochastic control approach to reciprocal diffusion processes.
\newblock \emph{Applied mathematics and Optimization}, 23\penalty0
  (1):\penalty0 313--329, 1991.

\bibitem[Dinh et~al.(2015)Dinh, Krueger, and Bengio]{dinh2014nice}
Dinh, L., Krueger, D., and Bengio, Y.
\newblock {NICE}: Non-linear independent components estimation.
\newblock In \emph{ICLR}, 2015.

\bibitem[Dinh et~al.(2017)Dinh, Sohl-Dickstein, and Bengio]{dinh2016density}
Dinh, L., Sohl-Dickstein, J., and Bengio, S.
\newblock Density estimation using {R}eal {NVP}.
\newblock In \emph{ICLR}, 2017.

\bibitem[Gao et~al.(2019)Gao, Jiao, Wang, Wang, Yang, and Zhang]{gao2019deep}
Gao, Y., Jiao, Y., Wang, Y., Wang, Y., Yang, C., and Zhang, S.
\newblock Deep generative learning via variational gradient flow.
\newblock In \emph{ICML}, 2019.

\bibitem[Gao et~al.(2020)Gao, Huang, Jiao, Liu, Lu, and
  Yang]{gao2020generative}
Gao, Y., Huang, J., Jiao, Y., Liu, J., Lu, X., and Yang, Z.
\newblock Generative learning with euler particle transport.
\newblock \emph{arXiv preprint arXiv:2012.06094}, 2020.

\bibitem[Goodfellow et~al.(2014)Goodfellow, Pouget-Abadie, Mirza, Xu,
  Warde-Farley, Ozair, Courville, and Bengio]{goodfellow14}
Goodfellow, I., Pouget-Abadie, J., Mirza, M., Xu, B., Warde-Farley, D., Ozair,
  S., Courville, A., and Bengio, Y.
\newblock Generative adversarial nets.
\newblock In \emph{Advances in Neural Information Processing Systems 27}, pp.\
  2672--2680. Curran Associates, Inc., 2014.

\bibitem[Gulrajani et~al.(2017)Gulrajani, Ahmed, Arjovsky, Dumoulin, and
  Courville]{gulrajani17}
Gulrajani, I., Ahmed, F., Arjovsky, M., Dumoulin, V., and Courville, A.~C.
\newblock Improved training of {W}asserstein gans.
\newblock In \emph{NIPS}, 2017.

\bibitem[Heusel et~al.(2017)Heusel, Ramsauer, Unterthiner, Nessler, and
  Hochreiter]{heusel17}
Heusel, M., Ramsauer, H., Unterthiner, T., Nessler, B., and Hochreiter, S.
\newblock {GAN}s trained by a two time-scale update rule converge to a local
  nash equilibrium.
\newblock In \emph{NIPS}, 2017.

\bibitem[Higham(2001)]{higham2001algorithmic}
Higham, D.~J.
\newblock An algorithmic introduction to numerical simulation of stochastic
  differential equations.
\newblock \emph{SIAM review}, 43\penalty0 (3):\penalty0 525--546, 2001.

\bibitem[Jamison(1975)]{jamison1975markov}
Jamison, B.
\newblock The markov processes of schr{\"o}dinger.
\newblock \emph{Zeitschrift f{\"u}r Wahrscheinlichkeitstheorie und Verwandte
  Gebiete}, 32\penalty0 (4):\penalty0 323--331, 1975.

\bibitem[Johnson \& Zhang(2018)Johnson and Zhang]{johnson18}
Johnson, R. and Zhang, T.
\newblock Composite functional gradient learning of generative adversarial
  models.
\newblock In \emph{ICML}, 2018.

\bibitem[Karras et~al.(2018)Karras, Aila, Laine, and Lehtinen]{karras18}
Karras, T., Aila, T., Laine, S., and Lehtinen, J.
\newblock Progressive growing of {GAN}s for improved quality, stability, and
  variation.
\newblock In \emph{ICLR}, 2018.

\bibitem[Karras et~al.(2019)Karras, Laine, and Aila]{karras2019style}
Karras, T., Laine, S., and Aila, T.
\newblock A style-based generator architecture for generative adversarial
  networks.
\newblock In \emph{Proceedings of the IEEE/CVF Conference on Computer Vision
  and Pattern Recognition}, pp.\  4401--4410, 2019.

\bibitem[Kingma \& Ba(2014)Kingma and Ba]{kingma2014adam}
Kingma, D.~P. and Ba, J.
\newblock Adam: A method for stochastic optimization.
\newblock \emph{arXiv preprint arXiv:1412.6980}, 2014.

\bibitem[Kingma \& Dhariwal(2018)Kingma and Dhariwal]{kingma2018glow}
Kingma, D.~P. and Dhariwal, P.
\newblock Glow: Generative flow with invertible 1x1 convolutions.
\newblock In \emph{NIPS}, 2018.

\bibitem[Kingma \& Welling(2014)Kingma and Welling]{kingma14}
Kingma, D.~P. and Welling, M.
\newblock Auto-encoding variational bayes.
\newblock In \emph{ICLR}, 2014.

\bibitem[Krizhevsky et~al.(2009)Krizhevsky, Hinton,
  et~al.]{krizhevsky2009learning}
Krizhevsky, A., Hinton, G., et~al.
\newblock Learning multiple layers of features from tiny images.
\newblock 2009.

\bibitem[L{\'e}onard(2014)]{leonard2014survey}
L{\'e}onard, C.
\newblock A survey of the schrodinger problem and some of its connections with
  optimal transport.
\newblock \emph{DYNAMICAL SYSTEMS}, 34\penalty0 (4):\penalty0 1533--1574, 2014.

\bibitem[Liu et~al.(2015)Liu, Luo, Wang, and Tang]{celebA}
Liu, Z., Luo, P., Wang, X., and Tang, X.
\newblock Deep learning face attributes in the wild.
\newblock In \emph{ICCV}, 2015.

\bibitem[Liutkus et~al.(2019)Liutkus, Simsekli, Majewski, Durmus, St{\"o}ter,
  Chaudhuri, and Salakhutdinov]{liutkus2019sliced}
Liutkus, A., Simsekli, U., Majewski, S., Durmus, A., St{\"o}ter, F.-R.,
  Chaudhuri, K., and Salakhutdinov, R.
\newblock Sliced-{W}asserstein flows: Nonparametric generative modeling via
  optimal transport and diffusions.
\newblock In \emph{ICML}, 2019.

\bibitem[Makhzani et~al.(2016)Makhzani, Shlens, Jaitly, and
  Goodfellow]{makhzani2016adversarial}
Makhzani, A., Shlens, J., Jaitly, N., and Goodfellow, I.
\newblock Adversarial autoencoders.
\newblock In \emph{ICLR Workshop}, 2016.

\bibitem[Miyato et~al.(2018)Miyato, Kataoka, Koyama, and Yoshida]{miyato18}
Miyato, T., Kataoka, T., Koyama, M., and Yoshida, Y.
\newblock Spectral normalization for generative adversarial networks.
\newblock In \emph{ICLR}, 2018.

\bibitem[Nowozin et~al.(2016)Nowozin, Cseke, and Tomioka]{nowozin16}
Nowozin, S., Cseke, B., and Tomioka, R.
\newblock $f$-{GAN}: Training generative neural samplers using variational
  divergence minimization.
\newblock In \emph{NIPS}, 2016.

\bibitem[Papamakarios et~al.(2017)Papamakarios, Pavlakou, and
  Murray]{papamakarios2017masked}
Papamakarios, G., Pavlakou, T., and Murray, I.
\newblock Masked autoregressive flow for density estimation.
\newblock In \emph{NIPS}, 2017.

\bibitem[Prenger et~al.(2019)Prenger, Valle, and
  Catanzaro]{prenger2019waveglow}
Prenger, R., Valle, R., and Catanzaro, B.
\newblock Waveglow: A flow-based generative network for speech synthesis.
\newblock In \emph{IEEE International Conference on Acoustics, Speech and
  Signal Processing}, pp.\  3617--3621, 2019.

\bibitem[Radford et~al.(2015)Radford, Metz, and Chintala]{radford15}
Radford, A., Metz, L., and Chintala, S.
\newblock Unsupervised representation learning with deep convolutional
  generative adversarial networks.
\newblock \emph{arXiv preprint arXiv:1511.06434}, 2015.

\bibitem[Razavi et~al.(2019)Razavi, van~den Oord, and
  Vinyals]{razavi2019generating}
Razavi, A., van~den Oord, A., and Vinyals, O.
\newblock Generating diverse high-fidelity images with vq-vae-2.
\newblock In \emph{Advances in Neural Information Processing Systems}, pp.\
  14837--14847, 2019.

\bibitem[Ronneberger et~al.(2015)Ronneberger, Fischer, and
  Brox]{ronneberger2015u}
Ronneberger, O., Fischer, P., and Brox, T.
\newblock U-net: Convolutional networks for biomedical image segmentation.
\newblock In \emph{International Conference on Medical image computing and
  computer-assisted intervention}, pp.\  234--241. Springer, 2015.

\bibitem[Salimans et~al.(2016)Salimans, Goodfellow, Zaremba, Cheung, Radford,
  and Chen]{salimans2016improved}
Salimans, T., Goodfellow, I., Zaremba, W., Cheung, V., Radford, A., and Chen,
  X.
\newblock Improved techniques for training gans.
\newblock In \emph{NIPS}, 2016.

\bibitem[Schr{\"o}dinger(1932)]{schrodinger1932theorie}
Schr{\"o}dinger, E.
\newblock Sur la th{\'e}orie relativiste de l'{\'e}lectron et
  l'interpr{\'e}tation de la m{\'e}canique quantique.
\newblock In \emph{Annales de l'institut Henri Poincar{\'e}}, volume~2, pp.\
  269--310, 1932.

\bibitem[Shen et~al.(2020)Shen, Gu, Tang, and Zhou]{shen2020interpreting}
Shen, Y., Gu, J., Tang, X., and Zhou, B.
\newblock Interpreting the latent space of gans for semantic face editing.
\newblock In \emph{Proceedings of the IEEE/CVF Conference on Computer Vision
  and Pattern Recognition}, pp.\  9243--9252, 2020.

\bibitem[Shen et~al.(2019)Shen, Yang, and Zhang]{shen2019deep}
Shen, Z., Yang, H., and Zhang, S.
\newblock Deep network approximation characterized by number of neurons.
\newblock \emph{arXiv preprint arXiv:1906.05497}, 2019.

\bibitem[Song et~al.(2021{\natexlab{a}})Song, Meng, and
  Ermon]{song2021denoising}
Song, J., Meng, C., and Ermon, S.
\newblock Denoising diffusion implicit models.
\newblock In \emph{International Conference on Learning Representations},
  2021{\natexlab{a}}.

\bibitem[Song \& Ermon(2019)Song and Ermon]{song2019generative}
Song, Y. and Ermon, S.
\newblock Generative modeling by estimating gradients of the data distribution.
\newblock In \emph{NIPS}, pp.\  11918--11930, 2019.

\bibitem[Song \& Ermon(2020)Song and Ermon]{song2020improved}
Song, Y. and Ermon, S.
\newblock Improved techniques for training score-based generative models.
\newblock In \emph{NIPS}, volume~33. Curran Associates, Inc., 2020.

\bibitem[Song et~al.(2021{\natexlab{b}})Song, Sohl-Dickstein, Kingma, Kumar,
  Ermon, and Poole]{song2021scorebased}
Song, Y., Sohl-Dickstein, J., Kingma, D.~P., Kumar, A., Ermon, S., and Poole,
  B.
\newblock Score-based generative modeling through stochastic differential
  equations.
\newblock In \emph{International Conference on Learning Representations},
  2021{\natexlab{b}}.

\bibitem[Tolstikhin et~al.(2018)Tolstikhin, Bousquet, Gelly, and
  Sch{\"o}lkopf]{tolstikhin2018wasserstein}
Tolstikhin, I., Bousquet, O., Gelly, S., and Sch{\"o}lkopf, B.
\newblock {W}asserstein auto-encoders.
\newblock In \emph{ICLR}, 2018.

\bibitem[Van Den~Oord et~al.(2016)Van Den~Oord, Dieleman, Zen, Simonyan,
  Vinyals, Graves, Kalchbrenner, Senior, and Kavukcuoglu]{oord16}
Van Den~Oord, A., Dieleman, S., Zen, H., Simonyan, K., Vinyals, O., Graves, A.,
  Kalchbrenner, N., Senior, A.~W., and Kavukcuoglu, K.
\newblock Wave{N}et: A generative model for raw audio.
\newblock In \emph{SSW}, 2016.

\bibitem[Vaswani et~al.(2017)Vaswani, Shazeer, Parmar, Uszkoreit, Jones, Gomez,
  Kaiser, and Polosukhin]{vaswani2017attention}
Vaswani, A., Shazeer, N., Parmar, N., Uszkoreit, J., Jones, L., Gomez, A.~N.,
  Kaiser, {\L}., and Polosukhin, I.
\newblock Attention is all you need.
\newblock In \emph{Proceedings of the 31st International Conference on Neural
  Information Processing Systems}, pp.\  6000--6010, 2017.

\bibitem[Vershynin(2018)]{vershynin2018high}
Vershynin, R.
\newblock \emph{High-dimensional probability: An introduction with applications
  in data science}, volume~47.
\newblock Cambridge university press, 2018.

\bibitem[Vincent(2011)]{vincent2011connection}
Vincent, P.
\newblock A connection between score matching and denoising autoencoders.
\newblock \emph{Neural computation}, 23\penalty0 (7):\penalty0 1661--1674,
  2011.

\bibitem[Zagoruyko \& Komodakis(2016)Zagoruyko and
  Komodakis]{zagoruyko2016wide}
Zagoruyko, S. and Komodakis, N.
\newblock Wide residual networks.
\newblock \emph{arXiv preprint arXiv:1605.07146}, 2016.

\bibitem[Zhang et~al.(2018)Zhang, Goodfellow, Metaxas, and Odena]{zhang18}
Zhang, H., Goodfellow, I., Metaxas, D., and Odena, A.
\newblock Self-attention generative adversarial networks.
\newblock \emph{arXiv preprint arXiv:1805.08318}, 2018.

\bibitem[Zhu et~al.(2016)Zhu, Kr{\"a}henb{\"u}hl, Shechtman, and Efros]{zhu16}
Zhu, J.-Y., Kr{\"a}henb{\"u}hl, P., Shechtman, E., and Efros, A.~A.
\newblock Generative visual manipulation on the natural image manifold.
\newblock In \emph{ECCV}, 2016.

\bibitem[Zhu et~al.(2017)Zhu, Park, Isola, and Efros]{zhu17}
Zhu, J.-Y., Park, T., Isola, P., and Efros, A.~A.
\newblock Unpaired image-to-image translation using cycle-consistent
  adversarial networks.
\newblock In \emph{ICCV}, 2017.

\end{thebibliography}
\bibliographystyle{icml2021}

%
%
%

\end{document}